\documentclass[sigconf]{acmart}

\usepackage{microtype}
\usepackage{enumitem}
\usepackage{bbm}
\usepackage{subcaption}
\usepackage{graphicx}
\usepackage{booktabs}
\usepackage{cancel}
\usepackage{hyperref}
\usepackage{xcolor}
\usepackage{balance}

\newcommand\Ccancel[2][black]{\renewcommand\CancelColor{\color{#1}}\xcancel{#2}}
\newcommand\independent{\protect\mathpalette{\protect\independenT}{\perp}}
\def\independenT#1#2{\mathrel{\rlap{$#1#2$}\mkern2mu{#1#2}}}
\usepackage{xcolor,soul}
\sethlcolor{lightgray}
\usepackage{graphicx}
\usepackage{amsthm}
\usepackage{amsmath}
\usepackage{tikz}
\usetikzlibrary{arrows.meta}
\newtheorem{assumption}{Assumption}

\AtBeginDocument{%
  }

\copyrightyear{2026}
\acmYear{2026}
\setcopyright{cc}
\setcctype{by}
\acmConference[SIGIR '26]{Proceedings of the 49th International ACM SIGIR Conference on Research and Development in Information Retrieval}{July 20--24, 2026}{Melbourne, VIC, Australia}
\acmBooktitle{Proceedings of the 49th International ACM SIGIR Conference on Research and Development in Information Retrieval (SIGIR '26), July 20--24, 2026, Melbourne, VIC, Australia}
\acmDOI{10.1145/3805712.3809673}
\acmISBN{979-8-4007-2599-9/2026/07}

\begin{document}

\title{Inductive Subgraphs as Shortcuts: Causal Disentanglement for Heterophilic Graph Learning}

\author{Xiangmeng Wang}
\authornote{Both authors contributed equally to this research.}
\email{xiangmengpoly.wang@polyu.edu.hk}
\orcid{0000-0003-3643-3353}
\affiliation{
  \institution{The Hong Kong Polytechnic University}
  \city{Kowloon}
  \country{Hong Kong}
}

\author{Qian Li}
\authornotemark[1]
\email{qli@curtin.edu.au}
\orcid{0000-0002-8308-9551}
\affiliation{
  \institution{Curtin University}
  \city{Perth}
  \country{Australia}
}

\author{Haiyang Xia}
\email{hyxia@tulip.academy}
\orcid{0000-0002-0363-1460}
\affiliation{
  \institution{University of Macau}
  \city{Taipa}
  \country{Macau}
}

\author{Hao Miao}
\email{hao.miao@polyu.edu.hk}
\orcid{0000-0001-9346-7133}
\affiliation{
  \institution{The Hong Kong Polytechnic University}
  \city{Kowloon}
  \country{Hong Kong}
}

\author{Qing Li}
\authornote{Corresponding author.}
\email{qing-prof.li@polyu.edu.hk}
\orcid{0000-0003-3370-471X}
\affiliation{
  \institution{The Hong Kong Polytechnic University}
  \city{Kowloon}
  \country{Hong Kong}
}

\author{Guandong Xu}
\email{gdxu@eduhk.hk}
\orcid{0000-0003-4493-6663}
\affiliation{
  \institution{The Education University of Hong Kong}
  \city{New Territories}
  \country{Hong Kong}
}

\renewcommand{\shortauthors}{Xiangmeng Wang et al.}

\begin{abstract}
Heterophily is a prevalent property of real-world graphs and is well known to impair the performance of homophilic Graph Neural Networks (GNNs). Prior work has attempted to adapt GNNs to heterophilic graphs through non-local neighbor extension or architecture refinement. However, the fundamental reasons behind misclassifications remain poorly understood. In this work, we take a novel perspective by examining recurring inductive subgraphs, empirically and theoretically showing that they act as spurious shortcuts that mislead GNNs and reinforce non-causal correlations in heterophilic graphs. To address this, we adopt a causal inference perspective to analyze and correct the biased learning behavior induced by shortcut inductive subgraphs. We propose a debiased causal graph that explicitly blocks confounding and spillover paths responsible for these shortcuts. Guided by this causal graph, we introduce Causal Disentangled GNN (CD-GNN), a principled framework that disentangles spurious inductive subgraphs from true causal subgraphs by explicitly blocking non-causal paths. By focusing on genuine causal signals, CD-GNN substantially improves the robustness and accuracy of node classification in heterophilic graphs. Extensive experiments on real-world datasets not only validate our theoretical findings but also demonstrate that our proposed CD-GNN outperforms state-of-the-art heterophily-aware baselines.
\end{abstract}

\begin{CCSXML}
<ccs2012>
   <concept>
       <concept_id>10002950.10003624.10003633.10010917</concept_id>
       <concept_desc>Mathematics of computing~Graph algorithms</concept_desc>
       <concept_significance>500</concept_significance>
       </concept>
   <concept>
       <concept_id>10010147.10010257.10010293.10010294</concept_id>
       <concept_desc>Computing methodologies~Neural networks</concept_desc>
       <concept_significance>500</concept_significance>
       </concept>
 </ccs2012>
\end{CCSXML}

\ccsdesc[500]{Mathematics of computing~Graph algorithms}
\ccsdesc[500]{Computing methodologies~Neural networks}

\keywords{Heterophilic Graphs; Graph Neural Networks; Causal Inference}


\maketitle

\section{Introduction}
\label{sec:introduction}
Heterophilic graphs are common in practice, where linked nodes often have different features and labels~\cite{zhu2024impact,gao2023addressing}. For example, in telecom fraud detection~\cite{jiang2022telecom, ren2024heterophilic}, scammers typically contact many legitimate users rather than other scammers, forming a heterophilous interaction graph. Such a heterophily property conflicts with the homophily assumption used in many message-passing GNNs (e.g., GCN~\cite{kipf2016semi}, GIN~\cite{xu2018powerful}) and has been shown to significantly degrade their node classification performance~\cite{luan2024heterophilic,bo2021beyond,pei2020geom,lei2025divergent}.

Prior work has mainly improved GNN performance on heterophilic graphs through two directions: non-local neighbor extension and architecture refinement. Non-local extension enlarges the receptive field to include more distant nodes, based on the intuition that similar nodes may be far apart under heterophily. Representative methods include higher-order neighbor mixing~\cite{abu2019mixhop} and potential neighbor discovery~\cite{pei2020geom}. Architecture refinement instead redesigns message passing, using spectral-based filters~\cite{zheng2023node,he2021bernnet,guo2023graph,bo2021beyond} or spatial-based mechanisms~\cite{yang2021diverse, chen2022graph,yan2022two}.
Spectral and spatial methods tackle heterophily by separating similarity from dissimilarity signals or by learning adaptive neighbor aggregation. 
In the spectral domain, FAGCN~\cite{bo2021beyond} employs a self-gating mechanism to capture both low-frequency and high-frequency components, where high-frequency signals encode neighborhood dissimilarity that standard low-pass GNNs often overlook.
ACM~\cite{luan2022revisiting} further combines low-pass, high-pass, and identity filtering within a multi-channel GCN. In the spatial domain, DMP~\cite{yang2021diverse} learns edge-specific propagation weights via attention to handle attribute heterophily, while GDAMNs~\cite{chen2022graph} models edge uncertainty with variational inference and learns attention-based edge weights to handle label heterophily.

\begin{figure}
    \centering
    \includegraphics[width=\linewidth]{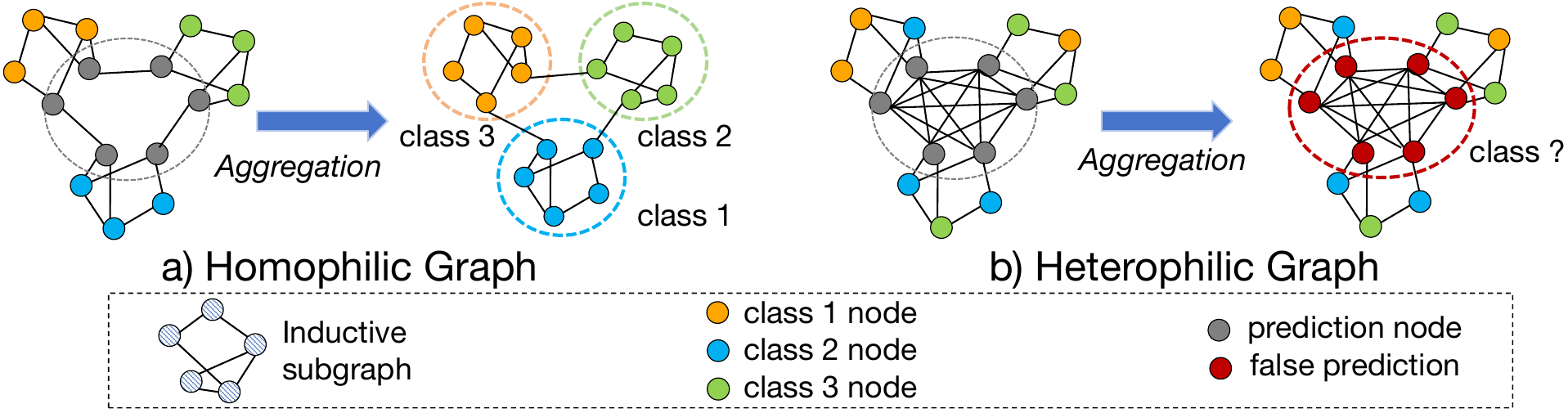}
    \caption{A toy example shows that an inductive subgraph helps a homophilic graph infer correct labels, while harming a heterophilic graph for false label prediction.}
    \label{fig:toy}
\end{figure}

Despite substantial progress on learning from heterophilic graphs, existing methods largely overlook inductive subgraphs as a hidden source of bias. 
Recent studies suggest that inductive subgraphs are ubiquitous in real-world graphs, manifesting as recurring local patterns where nodes and their neighborhoods exhibit consistent feature or label regularities~\cite{ying2019gnnexplainer,luo2020parameterized}. As illustrated in Figure~\ref{fig:toy}, such patterns can appear as repeated structural motifs (\raisebox{-0.25\height}{\includegraphics[height=1em]{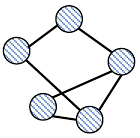}}) that are subgraphs encoding local structural and label regularities. While these structural priors are often viewed as helpful for inductive learning on homophilic graphs ({Figure~\ref{fig:toy} (a)}); in heterophilic graphs, they can induce inductive biases~\cite{battaglia2018relational} that reinforce incorrect associations between structual patterns and node labels ({Figure~\ref{fig:toy} (b)}), leading to degraded performance~\cite{ma2021homophily,luan2024heterophilic}. However, existing work has largely failed to systematically examine the role of structural priors in heterophilic settings.
As a result, the interaction between inductive graph patterns and heterophily remains poorly understood.

To fill in this gap, we improve heterophilic graph learning by inspecting and correcting structural inductive patterns from a graph-centric perspective.
We conduct comprehensive controlled experiments (Section~\ref{sec:empirical}) and develop rigorous theoretical results (Section~\ref{sec:inductive-subgraph-movements}). Our findings consistently show that inductive subgraphs are benign in homophilic graphs but can be malignant in heterophilic graphs, where they act as shortcuts that are easier for GNNs to fit while encoding causally spurious associations. We further interpret heterophilic GNN learning through a causal lens to explain when and why these shortcut effects arise.
Our causal analysis identifies two unblocked causal paths, a confounding path and a spillover path, that induce misleading associations between inductive subgraphs and model predictions. These spurious associations degrade standard message passing GNNs, especially in heterophilic settings where linked nodes often have different labels. Motivated by this insight, we propose the \textit{Causal Disentangled GNN (CD-GNN)}, which explicitly disentangles spurious inductive subgraphs from true causal factors without modifying the underlying GNN architecture. CD-GNN blocks both confounding and spillover paths in the causal graph so that predictions depend only on genuine causal influences. The resulting disentangled factors are represented as subgraphs that reflect the true drivers of the predictions, leading to more robust performance consistent with observed outcomes. To the best of our knowledge, this is the first work to study heterophilic graph learning from the perspective of inductive subgraphs.~\footnote{Recent works~\cite{yan2022two,liu2023beyond} attribute heterophily to node-level oversmoothing. In contrast, we identify structural inductive subgraphs as an overlooked source of degradation and shift the analysis from node-level dynamics to a structural perspective.}
Our major contributions are as follows:
\begin{itemize}
    \item \textbf{New Insights:} We provide empirical and theoretical analysis showing that inductive subgraphs can create spurious shortcuts that degrade GNN performance on heterophilic graphs. From a causal inference perspective, we identify two unblocked causal paths, confounding and spillover, that induce misleading associations between subgraphs and predictions. We further provide theoretical guarantees that our method mitigates this degradation in heterophilic settings.
    \item  \textbf{Causality‐driven Debiasing}: We propose CD-GNN, which de-biases the causal graph by explicitly blocking the confounding and spillover paths. It disentangles spurious inductive subgraphs from true causal factors, ensuring predictions rely on genuine causal influences.
\item  \textbf{Causality-Aligned Interpretation}: CD-GNN provides robust predictions in challenging heterophilic settings and provides subgraph-level interpretations that reflect the underlying causal drivers of its decisions.
\end{itemize}

\section{Notation and Preliminaries}
Let $\mathcal{G}=(\mathcal{V},\mathcal{E})$ be an undirected, unweighted graph with node set $\mathcal{V}$ and edge set $\mathcal{E}$. The neighborhood of node $v$ is denoted by $N(v)$, which may include $v$ due to self-loops. Graph connectivity is represented by an adjacency matrix $\mathbf{A}\in\{0,1\}^{|\mathcal{V}|\times|\mathcal{V}|}$. Each node $v$ has a $d$-dimensional feature vector $\mathbf{x}_v\in\mathbb{R}^d$, and all node features form the matrix $\mathbf{X}\in\mathbb{R}^{|\mathcal{V}|\times d}$.
We consider the node classification task, where each node $v$ has a label $y_v\in\mathcal{Y}$ from $C$ classes. 

\noindent \textbf{Heterophilic Graphs.}
Graph heterophily occurs when connected nodes have dissimilar features and labels, in contrast to homophily where similar nodes are more likely to connect.
Inspired by~\cite{luan2024heterophilic}, we measure heterophily using two complementary metrics. Specifically, we use the label heterophily ratio $h_L$\cite{pei2020geom} and the feature heterophily ratio $h_F$\cite{jin2022raw}.
$h_L$ is the fraction of edges connecting nodes with different labels, and $h_F$ measures feature dissimilarity via the cosine function.
Strongly heterophilic graphs have $h_L, h_F \rightarrow 1$; strongly homophilic graphs have $h_L, h_F \rightarrow 0$.

\begin{definition}[Inductive Subgraph Bias] 
\label{def:inductive graph}
A graph with $M < |\mathcal{V}|$ nodes is an inductive subgraph of $\mathcal{G}$ if the two conditions hold simultaneously:
(a) The ego node and its neighbors form a particular graph structure and share similar node features, and (b) nodes within the same graph structure share the same labels or follow a pre-defined (e.g., nodes at the top share the same label) labeling pattern.
Inductive subgraph induces \textit{inductive bias}~\cite{battaglia2018relational}, where consistent relationships between graph structure and node labels allow the model to predict unseen instances based on learned patterns.
\end{definition}

\section{Empirical Study and Theoretical Proof}
This work investigates a fundamental yet underexplored question:
\begin{itemize}
    \item \textit{Is the inductive subgraph the root cause of erroneous or biased outcomes when training GNNs on heterophilic graphs}?
\end{itemize}

\subsection{Empirical Study}
\label{sec:empirical}
\textbf{Setup.}
To study the effect of inductive subgraphs on GNNs in homophilic and heterophilic settings, we conduct an empirical study on \texttt{Tree-Cycles}, \texttt{Tree-Grid}, \texttt{BA-Shapes}, and \texttt{BA-Community} in~\cite{ying2019gnnexplainer}.
As shown in Figure~\ref{fig:empirical explanation}, all four datasets comprise multiple base and motif subgraphs. 
Within any single subgraph, all nodes share the same feature (i.e., a uniform feature $[1,1,...1]$). 
Labels are assigned according to 1) Uniform labeling: every node in the base or motif subgraphs has the same class label. 2) Role-based labeling: a node's label depends on its position in the subgraph, e.g., top-level nodes in the house motif in \texttt{BA-Shapes} are class 1.
These base and motif subgraphs constitute inductive subgraphs (Definition~\ref{def:inductive graph}), providing strong inductive bias for GNN predictions.

We compute graph heterophily for each dataset using the label heterophily metric $h_L$.~\footnote{To evaluate node classification performance under heterophily, we iteratively relabel nodes to assign labels dissimilar to their neighbors, progressively increasing the graph's heterophily ratio until convergence, i.e., no further increase is observed over 50 iterations. 50 is a commonly adopted threshold in early stopping strategies~\cite{zhang2005boosting}.}
We compare three GNN models: GCN~\cite{kipf2016semi}, GAT~\cite{velivckovic2017graph}, and GraphSAGE~\cite{hamilton2017inductive} (Table~\ref{tab:empirical}). 
After training all three models, we chose GCN for analysis due to its popularity~\cite{xu2018powerful} to explain how inductive subgraphs influence prediction mechanisms.
We fix GCN's parameters and apply PGExplainer~\cite{luo2020parameterized} to explain node label predictions; the explanation results are given in Figure~\ref{fig:empirical explanation}.

\begin{table}[!h]
\caption{Comparison of GNN performance across homophilic and heterophilic settings.
The node classification accuracy is averaged over 5 independent training rounds. 
Avg Drop.\% denotes the average performance drop from the worst (bold) and best (underlined) performances across the three models. 
}\label{tab:empirical}
\centering
\setlength{\tabcolsep}{5pt}
\resizebox{0.48\textwidth}{!}{
\begin{tabular}
    {c ccc ccc ccc ccc} \toprule
    Dataset
    & \multicolumn{3}{c}{\texttt{Tree-Cycles}} 
    & \multicolumn{3}{c}{{\texttt{Tree-Grid}}} 
    & \multicolumn{3}{c}{ {\texttt{BA-Shapes}}}
    & \multicolumn{3}{c}{\texttt{BA-Community}}\\\cmidrule(lr){2-4} \cmidrule(lr){5-7} \cmidrule(lr){8-10} \cmidrule(lr){11-13}
$h_L$ & 0.098
 & 0.250 & 0.500  & 0.055
 & 0.250 & 0.500  & 0.200
 & 0.500 & 0.750
  & 0.264
 & 0.500
 & 0.850
 \\\midrule 
\midrule

GCN &\underline{0.988} &0.886 &\textbf{0.545}  &\underline{0.991} &0.935 &\textbf{0.532}  & \underline{0.957}  & 0.890  & \textbf{0.614}   &\underline{0.842} & 0.714 	&\textbf{0.501} \\
GAT &\underline{0.994} & 0.897&\textbf{0.591} &\underline{0.882}&0.741&\textbf{0.613}&\underline{0.971}&0.914&\textbf{0.584}&\underline{0.839}&0.735 & \textbf{0.464}\\
GraphSAGE &\underline{0.960}&0.802&\textbf{0.569}&\underline{0.821}&0.732&\textbf{0.580}&\underline{0.993}&0.871&\textbf{0.602}&\underline{0.801}&0.683&\textbf{0.512}\\
\midrule
\textbf{Avg Drop.\%} &  &-42.04\%  &  &  &-35.39\%  &  &  &-38.36\%  &  &  &-40.46\%  &\\

\bottomrule
\end{tabular}
}
\end{table}
\begin{figure}[ht]
    \centering
    \includegraphics[width=0.5\textwidth]{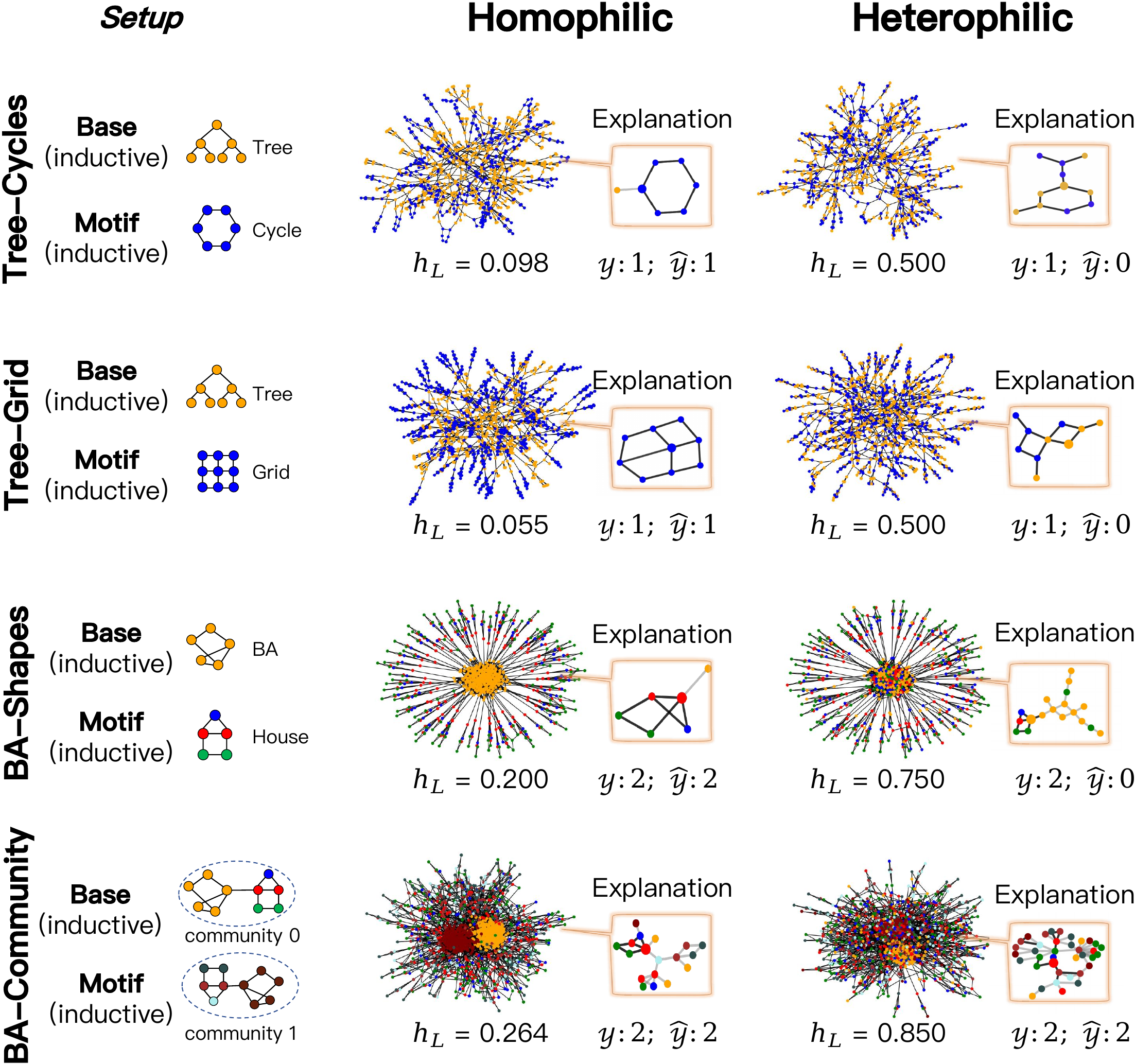}
    \caption{Prediction explanations: $y$, $\hat{y}$ is the ground truth and predicted label for the explaining node instance, respectively. 
    The node being explained is highlighted with a larger node size for clarity. 
    For consistency, the same node is selected for explanation across both homophilic and heterophilic settings within a single dataset.
    }
    \label{fig:empirical explanation}
\end{figure}

\begin{figure}[hbt!]
    \centering
    \includegraphics[width=0.5\textwidth]{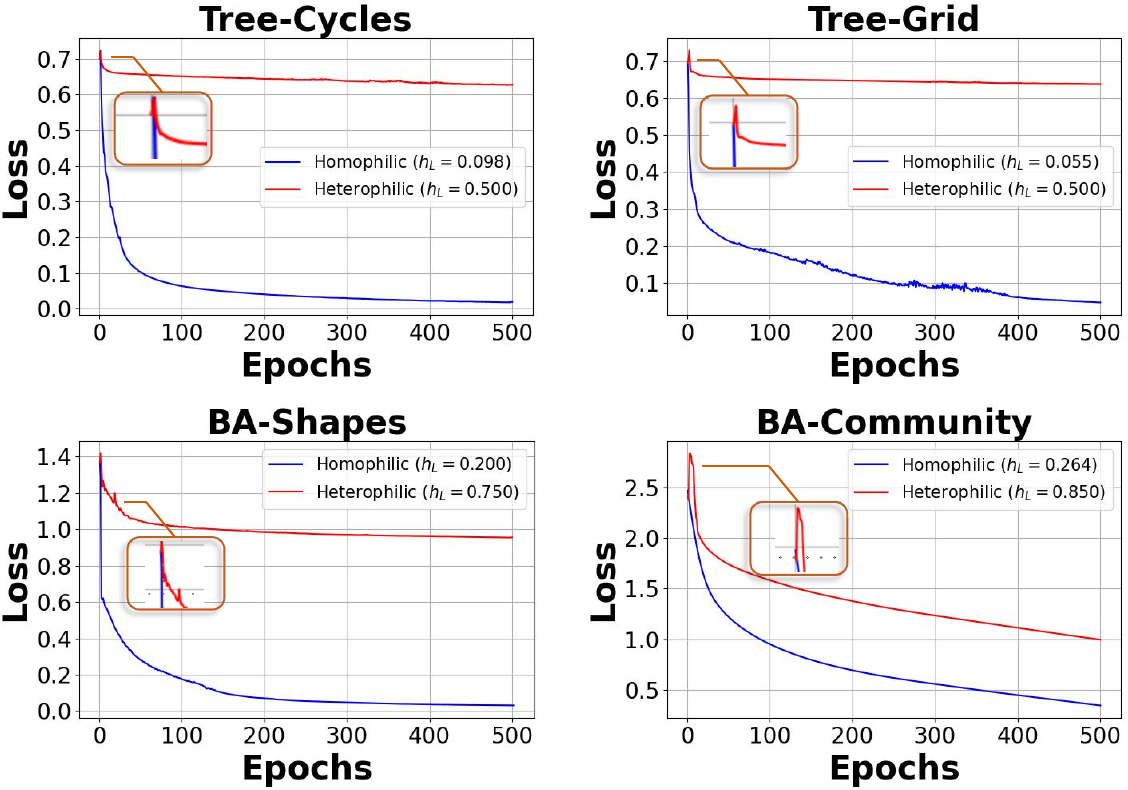}
    \caption{Training losses of the GNN model on homophilic and heterophilic settings.
    }
    \label{fig:empirical loss}
\end{figure}

\noindent \textbf{\textit{Observation 1}. Benign and malignant inductive subgraph.}
Table~\ref{tab:empirical} shows that GNN performance drops sharply as heterophily increases, showing their limited robustness on heterophilic graphs. For instance, GAT achieves only 0.464 accuracy on \texttt{BA-Community} with $h_L = 0.850$. To investigate this decline, we scrutinize GCN explanations in Figure~\ref{fig:empirical explanation}.
We observe a) \textbf{Benign Case}: On homophilic graphs (low heterophily), GCNs effectively exploit inductive subgraphs for label prediction. 
For example, in \texttt{Tree-Cycles}, a node with ground truth label $y=1$ exhibits a clear ``cycle'' (\raisebox{-0.25\height}{\includegraphics[height=1em]{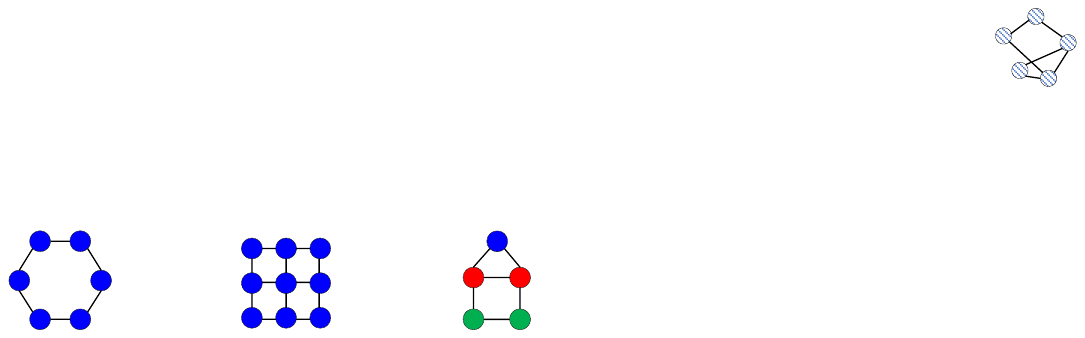}}) subgraph in its explanation, which the model uses as an inductive cue to predict $\hat{y}=1$. 
Similarly, distinctive ``grid'' (\raisebox{-0.25\height}{\includegraphics[height=1em]{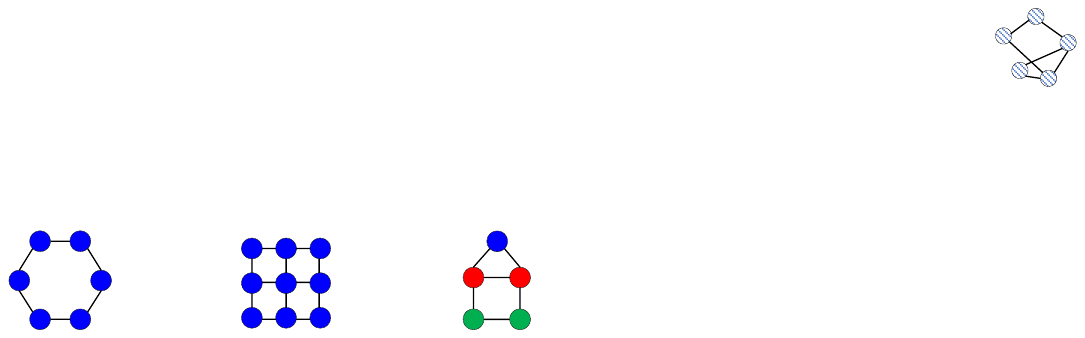}}) and ``house'' (\raisebox{-0.25\height}{\includegraphics[height=1em]{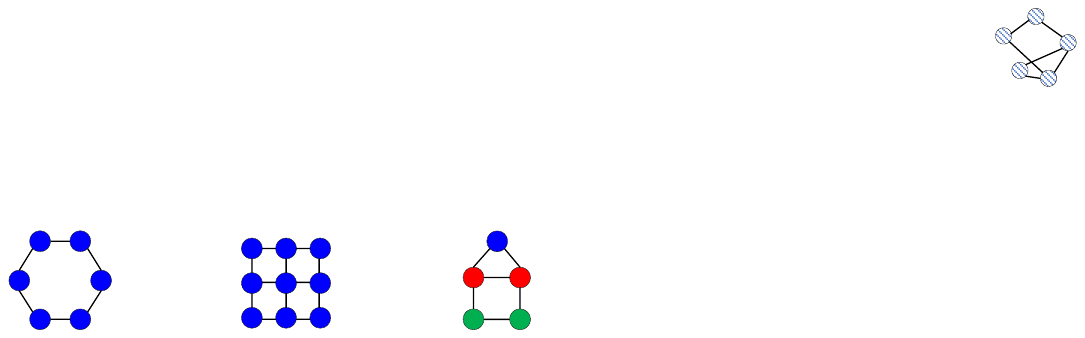}}) motifs are observed for correct predictions in \texttt{Tree-Grid} and \texttt{BA-Shapes}, respectively.
 b) \textbf{Malignant Case}: In heterophilic graphs, inductive subgraphs instead mislead GNNs and cause incorrect predictions. 
 For example, in \texttt{Tree-Cycles}, the GCN still identifies the ``cycle'' subgraph but predicts $\hat{y}=0$ since most neighbors within the ``cycle'' subgraph are labeled as $0$ (marked \tikz\fill[orange] (0,0) circle (3pt);). 
 This demonstrates how the same inductive structure introduces a malignant bias under heterophily, leading to misclassification.

\noindent \textbf{\textit{Observation 2}. Inductive subgraphs as shortcuts.}
Figure~\ref{fig:empirical loss} reveals distinct training dynamics of the GCN model under homophily and heterophily. In a homophilic graph, the model quickly learns benign inductive subgraphs aligned with the labels, leading to rapid loss reduction. In contrast, under heterophily, the model captures malignant inductive subgraphs, causing an increased training loss in the initial stage.
Therefore, inductive subgraphs act as misleading shortcuts in heterophilic graph learning: they are easy to capture during initial training, but do not reflect the true factors driving predictions (reflected by increased loss).
This behavior is consistent with \textit{shortcut learning}~\cite{geirhos2020shortcut,brown2023detecting}, where models favor simple, recurring patterns over more complex but causal structures.
We thus conclude that inductive subgraphs serve as shortcuts (denoted as $S$ throughout the paper) under heterophily, as they are easier to learn but are not causally responsible for correct predictions.

\subsection{Theoretical Proof}
\label{sec:inductive-subgraph-movements}
Prior work~\cite{yan2022two} explains degraded performance on heterophilic graphs via oversmoothing, where deeper layers make node representations increasingly similar despite dissimilar neighbors. This view, however, emphasizes representation dynamics and overlooks structural effects from specific subgraphs. In this section, we prove that inductive subgraphs can directly harm GNN performance on heterophilic graphs.
We adopt the renormalized linear propagation and the layer-wise effective homophily and relative degree notation from~\cite{yan2022two}. Thm~\ref{thm:theorem31prime} and Thm~\ref{thm:theorem32prime} provide a subgraph-aware characterization of how message passing alters class-related signals in heterophilic graphs, explaining why recurring motifs can degrade prediction accuracy.

\begin{theorem}[one-layer subgraph-aware conditional expectation]
\label{thm:theorem31prime}
Under Assumption and definitions in Section~\ref{subsec:assumptions_subgraph}, for any node $i$ at layer $\ell=0$ and conditioning on $(d_i,y_i,S_i^{(0)})$, the one-layer conditional expectation satisfies
\begin{equation}
\mathbb{E}\!\left[f_i^{(1)} \mid d_i,y_i,S_i^{(0)}\right]
=
\underbrace{\frac{1+W_i^0\Big((1+\rho)h^0_{i,\mathrm{eff}}-\rho\Big)}{d_i+1}}_{=:~G_i^{(0)}}
\;
\mathbb{E}\!\left[f_i^{(0)} \mid y_i\right]
\label{eq:C4_gain_simple}
\end{equation}
\end{theorem}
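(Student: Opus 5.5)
The plan follows the argument of Theorem 3.1 in~\cite{yan2022two}, with the conditioning set enlarged by the subgraph indicator $S_i^{(0)}$. I take the assumptions of Section~\ref{subsec:assumptions_subgraph} to supply four ingredients: (i) renormalized linear propagation $f_i^{(1)}=\frac{1}{d_i+1}\big(f_i^{(0)}+\sum_{j\in N(i)}\sqrt{\tfrac{d_i+1}{d_j+1}}\,f_j^{(0)}\big)$; (ii) a binary-class symmetry $\mathbb{E}[f_j^{(0)}\mid y_j\neq y_i]=-\rho\,\mathbb{E}[f_i^{(0)}\mid y_i]$; (iii) a subgraph-aware effective homophily $h^0_{i,\mathrm{eff}}$, the probability that a neighbor shares $y_i$ given $(d_i,y_i,S_i^{(0)})$, possibly with motif-dependent reweighting; and (iv) a subgraph-aware relative degree $W_i^0=\mathbb{E}\big[\sum_{j\in N(i)}\sqrt{(d_i+1)/(d_j+1)}\mid d_i,y_i,S_i^{(0)}\big]$. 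Here $\rho$ denotes the class-mean ratio.

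\textbf{Step 1.} Take conditional expectations of the propagation rule. By linearity, $\mathbb{E}[f_i^{(1)}\mid d_i,y_i,S_i^{(0)}]$ is $\frac{1}{d_i+1}$ times the sum of two pieces: the self term $\mathbb{E}[f_i^{(0)}\mid d_i,y_i,S_i^{(0)}]$ and the neighbor sum.

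\textbf{Step 2.} Simplify the self term. The assumption that layer-0 features depend only on the node's own label (i.e.\ $f_i^{(0)}\perp (d_i,S_i^{(0)})\mid y_i$) reduces it to $\mathbb{E}[f_i^{(0)}\mid y_i]$. This produces the ``$1$'' in the numerator of $G_i^{(0)}$.

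\textbf{Step 3.} Condition each neighbor term on whether $y_j=y_i$, using the tower property. A same-label neighbor contributes $\mathbb{E}[f_i^{(0)}\mid y_i]$. A different-label neighbor contributes $-\rho\,\mathbb{E}[f_i^{(0)}\mid y_i]$. The label agreement is weighted by $h^0_{i,\mathrm{eff}}$ and $1-h^0_{i,\mathrm{eff}}$ respectively.

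\textbf{Step 4.} Combine the two cases. Each neighbor then contributes
\[
\big(h^0_{i,\mathrm{eff}}-\rho(1-h^0_{i,\mathrm{eff}})\big)\,\mathbb{E}[f_i^{(0)}\mid y_i]
=\big((1+\rho)h^0_{i,\mathrm{eff}}-\rho\big)\,\mathbb{E}[f_i^{(0)}\mid y_i].
\]
Summing with the degree weights gives the factor $W_i^0$, and adding the self term yields \eqref{eq:C4_gain_simple}.

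\textbf{Main obstacle.} The key difficulty is justifying the factorization in Step 4. We need the degree weight $\sqrt{(d_i+1)/(d_j+1)}$ to be conditionally independent of the label agreement $\mathbbm{1}[y_j=y_i]$ and of $f_j^{(0)}$, given $(d_i,y_i,S_i^{(0)})$. Under an inductive subgraph, motif roles correlate degrees with labels, so this independence can fail.

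My first attempt is to use the assumption to directly posit this conditional independence within each motif context $S_i^{(0)}$. Failing that, I would \emph{define} $h^0_{i,\mathrm{eff}}$ as the degree-weighted homophily
\[
h^0_{i,\mathrm{eff}}=\frac{\mathbb{E}\big[\sum_j \sqrt{(d_i+1)/(d_j+1)}\,\mathbbm{1}[y_j=y_i]\mid\cdot\big]}{W_i^0},
\]
in which case the identity holds exactly by construction and no independence is needed. The subgraph dependence then lives entirely in $h^0_{i,\mathrm{eff}}$ and $W_i^0$. This is what later lets Thm~\ref{thm:theorem32prime} show that a malignant motif, which drives $h^0_{i,\mathrm{eff}}$ below $\rho/(1+\rho)$, shrinks or flips the gain $G_i^{(0)}$.
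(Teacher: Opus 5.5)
Your algebra is the paper's algebra: the renormalized update, the self term $\mathbb{E}[f_i^{(0)}\mid y_i]$, conditioning on $y_j=y_i$ versus $y_j\neq y_i$, the mean model with $-\rho$, and collecting relative degrees into $W_i^0$. The difference is where you bring in the inductive subgraph.

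The paper never uses one pooled homophily. It splits $\mathcal N_i$ into $\mathcal N^{(0)}_{i,S}$ and $\mathcal N^{(0)}_{i,R}$. It then uses group-wise exchangeability (Assumption~\ref{ass:group_exchangeability}, which explicitly excludes exchangeability across the groups) to get the group means $\big((1+\rho)h^0_{i,g}-\rho\big)\mathbb{E}[f_i^{(0)}\mid y_i]$. This gives the numerator $1+W^0_{i,S}\big((1+\rho)h^0_{i,S}-\rho\big)+W^0_{i,R}\big((1+\rho)h^0_{i,R}-\rho\big)$. Only then does it set $\beta_i^0=W^0_{i,S}/W_i^0$, so that $h^0_{i,\mathrm{eff}}=\beta_i^0h^0_{i,S}+(1-\beta_i^0)h^0_{i,R}$ and the two terms collapse into \eqref{eq:C4_gain_simple}.

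Your main line (Steps 3--4, where every neighbor agrees with $y_i$ with one common probability $h^0_{i,\mathrm{eff}}$) therefore does not match that setting. A pooled, unweighted homophily is $(d_{i,S}h^0_{i,S}+d_{i,R}h^0_{i,R})/d_i$. This differs from the $\beta$-mixture whenever $h^0_{i,S}\neq h^0_{i,R}$ and the motif neighbors have a different average relative degree from the rest. That difference is exactly the subgraph effect.

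Your fallback, the $r_{ij}$-weighted homophily, is the correct version, and one more line completes the match. Splitting its weighted sum over $\mathcal N^{(0)}_{i,S}$ and $\mathcal N^{(0)}_{i,R}$ gives exactly $\beta_i^0h^0_{i,S}+(1-\beta_i^0)h^0_{i,R}$, with $h^0_{i,g}$ the weighted group homophilies. These coincide with the paper's once $r_{ij}$ is treated as fixed and exchangeability makes the agreement probability constant within each group.

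The paper's route buys the explicit dominance-weighted mixture that the discussion after the theorem and Theorem~\ref{thm:improve_gains} depend on. Yours buys care about degrees: the paper pulls $r_{ij}$ outside the conditional expectation in \eqref{eq:thm31_split}, and so silently assumes the degree--label independence you flagged. Your weighted definition still needs a neighbor's feature mean to depend on its label alone (your Step 2 assumption applied to $j$), so ``no independence is needed'' is slightly overstated.
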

\label{subsubsec:theorem32prime}

\begin{theorem}[deeper-layer subgraph-aware conditional expectation]
\label{thm:theorem32prime}
We assume the baseline conditions under Theorem 3.2~\cite{yan2022two} holds.
Using the deep-layer dominance ratio and effective homophily in Assumption~\ref{ass:group_exchangeability} in Section~\ref{subsec:assumptions_subgraph},
we obtain the conditional expectation at layer $\ell+1$ for the \emph{subgraph-separated} form
\begin{equation}
\mathbb{E}\!\left[f_i^{(\ell+1)} \mid d_i,y_i,S_i^{(\ell)}\right]
=
\underbrace{\frac{\Big((1+\rho_i^\ell)\hat h^\ell_{i,\mathrm{eff}}-\rho_i^\ell\Big)\, d_i \bar r_i^\ell + 1}{d_i+1}}_{=:~G_i^{(\ell)}}
\;\xi_i^\ell\;
\mathbb{E}\!\left[f_i^{(0)} \mid y_i\right]
\label{eq:C5_gain_simple}
\end{equation}
\end{theorem}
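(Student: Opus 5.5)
The plan is to repeat the layer-$\ell$ argument of Theorem~3.2 in~\cite{yan2022two}, but to split the neighbourhood of $i$ into the part inside the inductive subgraph $S_i^{(\ell)}$ and the part outside it. The starting point is the renormalized linear propagation
\begin{equation*}
f_i^{(\ell+1)}=\frac{1}{d_i+1}\Big(f_i^{(\ell)}+\sum_{j\in N(i)} f_j^{(\ell)}\Big),
\end{equation*}
with the self-term kept separate; it is this self-term that produces the ``$+1$'' in the numerator of $G_i^{(\ell)}$. We take conditional expectations given $(d_i,y_i,S_i^{(\ell)})$ and use linearity. The neighbour sum then splits as $\sum_{j\in S_i^{(\ell)}}+\sum_{j\notin S_i^{(\ell)}}$, and each piece splits further into same-label neighbours ($y_j=y_i$) and different-label neighbours ($y_j\neq y_i$).

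\textbf{Step 1: the inductive hypothesis from~\cite{yan2022two}.} From the baseline conditions of Theorem~3.2 we obtain the layer-$\ell$ form $\mathbb{E}[f_j^{(\ell)}\mid d_j,y_j]=\xi_j^\ell\,\mathbb{E}[f_j^{(0)}\mid y_j]$. For $C$ classes with symmetric class means, a neighbour of a different class contributes in expectation $-\rho\,\mathbb{E}[f_i^{(0)}\mid y_i]$ rescaled by its relative degree factor. Here $\rho$ is the dominance ratio, which reduces to $1/(C-1)$ in the symmetric case.

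\textbf{Step 2: group exchangeability.} Assumption~\ref{ass:group_exchangeability} is used in two ways. First, within each group (inside or outside the subgraph), neighbour labels and degrees are exchangeable. Second, the ratios $\xi_j^\ell/\xi_i^\ell$ and relative degrees can be replaced by their group averages. Averaging over the two groups, with the subgraph-weighted effective homophily $\hat h^\ell_{i,\mathrm{eff}}$ (a convex combination of the in-subgraph and out-of-subgraph same-label fractions) and the averaged relative degree $\bar r_i^\ell$, gives
\begin{equation*}
\sum_j \mathbb{E}[f_j^{(\ell)}\mid\cdot]=d_i\bar r_i^\ell\big(\hat h^\ell_{i,\mathrm{eff}}-\rho_i^\ell(1-\hat h^\ell_{i,\mathrm{eff}})\big)\xi_i^\ell\,\mathbb{E}[f_i^{(0)}\mid y_i].
\end{equation*}
The bracket simplifies to $(1+\rho_i^\ell)\hat h^\ell_{i,\mathrm{eff}}-\rho_i^\ell$.

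\textbf{Step 3: collect terms.} Adding the self-term $\xi_i^\ell\,\mathbb{E}[f_i^{(0)}\mid y_i]$ and dividing by $d_i+1$ gives exactly $G_i^{(\ell)}\xi_i^\ell\,\mathbb{E}[f_i^{(0)}\mid y_i]$. As a consistency check, setting $\ell=0$, $\xi=1$ and $d_i\bar r_i^0=W_i^0$ should recover Theorem~\ref{thm:theorem31prime}.

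\textbf{Main obstacle.} The hard part is justifying that conditioning on $S_i^{(\ell)}$ does not break the factorization inherited from~\cite{yan2022two}. Within the subgraph, neighbour labels are correlated, since the subgraph follows a fixed labeling pattern, so they are not i.i.d. given $y_i$. I would handle this by relying only on first moments, where linearity needs no independence. The remaining requirement is that $\xi_j^\ell$ and $r_j^\ell$ be conditionally uncorrelated with the indicator $\mathbb{1}[y_j=y_i]$ within each group. Exactly this decoupling is what the group-exchangeability assumption should be invoked for, and I would state it explicitly as the point where the assumption enters.
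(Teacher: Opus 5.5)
Your outer structure is the same as the paper's. You split $\mathcal N_i$ into $\mathcal N^{(\ell)}_{i,S}$ and $\mathcal N^{(\ell)}_{i,R}$, get an affine contribution from each group, recombine through $d_i\bar r_i^\ell=d^{(\ell)}_{i,S}\bar r^\ell_{i,S}+d^{(\ell)}_{i,R}\bar r^\ell_{i,R}$, and add the self-term. Your update rule should carry the weights, $\sum_j r_{ij}f_j^{(\ell)}$; without them no $\bar r_i^\ell$ can appear.

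\textbf{The gap: you split by label, but the statement's quantities are not label quantities.} Inside each group you split by label and read $\hat h^\ell_{i,g}$ as a same-label fraction. In the statement, however, $\hat h^\ell_{i,g}=\Pr\big(j\in\hat{\mathcal N}_i(\ell)\mid j\in\mathcal N^{(\ell)}_{i,g},d_i,y_i,S_i^{(\ell)}\big)$ is the probability that $j$ contributes \emph{positively} at layer $\ell$. Likewise, $\rho_i^\ell$ is the baseline's layer-$\ell$ size of a negative contribution relative to a positive one. At depth these differ from label quantities. Under your own hypothesis $\mathbb E[f_j^{(\ell)}\mid d_j,y_j]=\xi_j^\ell\,\mathbb E[f_j^{(0)}\mid y_j]$, neighbour $j$ pushes along $\mathbb E[f_i^{(0)}\mid y_i]$ iff $\xi_j^\ell s_j>0$, where $s_j=1$ or $-\rho$ according to its label. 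So a same-label neighbour whose representation has flipped sign contributes negatively.

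\textbf{What your decoupling actually produces.} Suppose you replace $r_{ij}\xi_j^\ell/\xi_i^\ell$ by a group average $c_g\bar r^\ell_{i,g}$ uncorrelated with $\mathbf 1[y_j=y_i]$. Then a different-label neighbour's expected contribution is exactly $-\rho$ times a same-label one's, with $\rho$ the layer-$0$ constant. Each group therefore returns $c_g\big((1+\rho)h_{i,g}-\rho\big)d^{(\ell)}_{i,g}\bar r^\ell_{i,g}\,\xi_i^\ell\,\mathbb E[f_i^{(0)}\mid y_i]$, where $h_{i,g}$ is label homophily. Summing gives mixture weights proportional to $c_g d^{(\ell)}_{i,g}\bar r^\ell_{i,g}$ rather than $\beta_i^\ell$, plus an extra overall factor, unless $c_S=c_R=1$. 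That is the one-layer form of Theorem~\ref{thm:theorem31prime} rescaled by $\xi_i^\ell$. The layer-dependent $\rho_i^\ell$ in your Step~2 is never derived. Your $\ell=0$ check passes only because at layer $0$ ``positive contribution'' and ``same label'' coincide.

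\textbf{The repair, as in the paper.} Within each group $g$, condition on the baseline event $j\in\hat{\mathcal N}_i(\ell)$ instead of on $y_j$. The baseline Theorem~3.2 conditions, which the statement assumes, supply the per-neighbour values. In units of $\xi_i^\ell\,\mathbb E[f_i^{(0)}\mid y_i]$ and after relative-degree weighting, a neighbour in $\hat{\mathcal N}_i(\ell)$ contributes $+1$ on average and one outside it contributes $-\rho_i^\ell$, with the same $\rho_i^\ell$ in both groups. Each group then gives $\big((1+\rho_i^\ell)\hat h^\ell_{i,g}-\rho_i^\ell\big)d^{(\ell)}_{i,g}\bar r^\ell_{i,g}\,\xi_i^\ell\,\mathbb E[f_i^{(0)}\mid y_i]$. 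Dividing the sum by $d_i\bar r_i^\ell$ produces $\beta_i^\ell$ and $\hat h^\ell_{i,\mathrm{eff}}$ directly.

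\textbf{Exchangeability does not give your decoupling.} Exchangeability of the multisets $\{(y_j,f_j^{(\ell)},d_j)\}$ only makes the neighbours within a group identically distributed, so the group-level $\hat h^\ell_{i,g}$ and $\bar r^\ell_{i,g}$ are meaningful per neighbour. It says nothing about independence between $y_j$ and $(f_j^{(\ell)},d_j)$ for a given $j$. The paper's route never needs such independence.
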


Thm~\ref{thm:theorem31prime} shows: conditioning on $(d_i,y_i,S_i^{(0)})$, one propagation step scales class-conditional base signal by a
scalar gain $G_i^{(0)}$:
\begin{equation}
\mathbb{E}\!\left[f_i^{(1)}\mid d_i,y_i,S_i^{(0)}\right]
= G_i^{(0)}\,\mathbb{E}\!\left[f_i^{(0)}\mid y_i\right]
\end{equation}

The subgraph effect enters through the mixture
\begin{equation}
h^0_{i,\mathrm{eff}}=\beta_i^0 h^0_{i,S}+(1-\beta_i^0)h^0_{i,R}
\end{equation}
where $\beta_i^0$ is the aggregation share from the inductive subgraph $S$ (and $R=V\setminus S$).
On heterophilous graphs, $h^0_{i,S}$ and $h^0_{i,R}$ are often small; if the inductive subgraph is \emph{dominant}
(large $\beta_i^0$) and \emph{label-mismatched} (small $h^0_{i,S}$), then $h^0_{i,\mathrm{eff}}$ decreases and so does $G_i^{(0)}$,
weakening $\mathbb{E}[f_i^{(1)}\mid\cdot]$.
Thm~\ref{thm:theorem32prime} extends this to depth by giving a recursion with a layer-wise gain $G_i^{(\ell)}$:
\begin{equation}
\mathbb{E}\!\left[f_i^{(\ell+1)}\mid d_i,y_i,S_i^{(\ell)}\right]
=
G_i^{(\ell)}\,\xi_i^\ell\,\mathbb{E}\!\left[f_i^{(0)}\mid y_i\right]
\end{equation}

In heterophilous settings, $\bar r_i^\ell$ is typically non-positive. If the inductive subgraph remains influential across layers
(large $\rho_i^\ell$) and label-mismatched, then $\hat h^\ell_{i,\mathrm{eff}}$ becomes smaller, pushing
$\big((1+\rho_i^\ell)\hat h^\ell_{i,\mathrm{eff}}-\rho_i^\ell\big)$ toward a small or negative value and reducing $G_i^{(\ell)}$.
When $G_i^{(\ell)}<1$ over multiple layers, the expected class-related component decays multiplicatively with depth, leading to
systematic performance drops, especially for nodes whose computation graphs repeatedly overlap the inductive subgraph.

\section{Causal Analysis }
\label{sec:causal view}

\begin{figure}[ht]
    \centering\includegraphics[width=0.45\textwidth]{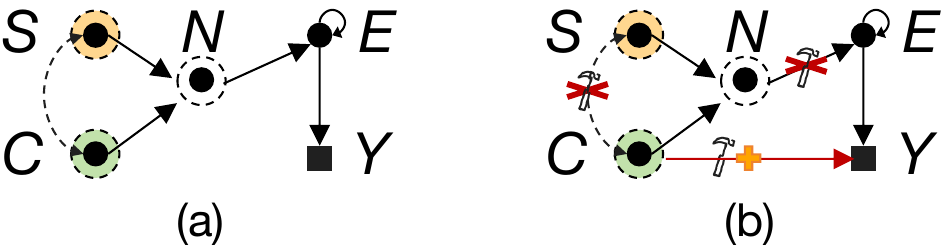}
    \caption{(a) Causal graph of existing GNN training. (b) Our proposed debiased causal graph.
    }
    \label{fig:scm}
\end{figure}

Causal learning has proven effective in uncovering model learning trajectories and exposing sources of model misrepresentation~\cite{pearl2009causality,wang2022causal,li2023causal, yu2023deconfounded}.
It has been widely applied to inspect model behaviors in various tasks, such as explanation~\cite{wang2024reinforced,yu2023counterfactual}, fairness~\cite{wang2024counterfactual,wang2024constrained} and bias mitigation~\cite{yu2025causal,huang2024counterfactual,wang2022off}.
Therefore, we adopt a causal perspective to analyze and correct biased learning behaviors caused by inductive subgraphs in the heterophilic setting. 
We first construct a Structure Causal Model (SCM)~\cite{pearl2000models} in Figure~\ref{fig:scm} (a) based on the following observations:
1) $\boldsymbol{S \rightarrow N \leftarrow C}$: 
There exist a shortcut subgraph $S$ and a causal subgraph $C$ within the neighborhood $N$, where $S$ captures easy-to-learn but spurious patterns, and $C$ encodes the true structural factors driving correct predictions.
2) $\boldsymbol{S \leftrightarrow C}$ : In heterophilic graphs, nodes from shortcut $S$ and causal $C$ subgraphs are often entangled, whereas in homophilic graphs, the homophily assumption enables separation.
For example, in strongly homophilic graphs such as Citeseer, well-defined clusters (e.g., math vs. science) enable clear separation. In contrast, heterophilic graphs exhibit more random structures~\cite{luan2024heterophilic} and lack such clusters.
3) $\boldsymbol{N \rightarrow E\circlearrowright \rightarrow Y}$: Neighbor messages from $N$ are aggregated into the ego node $E$, which predicts the label $Y$. $E$ is updated iteratively across layers, forming a self-loop. We now recall the $d$-connection theory:

\begin{definition}[$d$-connection~\cite{pearl2009causality}]
\label{def:d-connection}
$d$-connection indicates that information can flow between two variables, meaning they are conditionally dependent given a set of observed variables. 
Formally, in a causal graph $G$, $X$ and $Y$ are $d$-connected given a conditioning set $Z$ if there exists at least one path $p$ between $X$ and $Y$ such that $p$ is active (i.e., not blocked) with respect to $Z$, i.e., $X \not\!\perp\!\!\!\perp Y \mid Z$. 
\end{definition}

According to $d$-connection,
we could find two unblocked paths that would make the shortcut subgraph $S$ and the label $Y$ $d$-connected:
\begin{enumerate}
   \item  \textbf{Confounding Path} $C \leftarrow S \rightarrow N \rightarrow E \rightarrow Y$: $S$ acts as the central node between $C$ and $N$, and blocks the path $C \rightarrow \cdots Y$; thus $S$ is identified as a \textit{confounder}~\cite{pearl1998there} that influences both the causal variable $C$ and the outcome $Y$.
    The presence of the confounder $S$ introduces \textit{confounding bias}.
    Consequently, the shortcut $S$ and the prediction $Y$ are spuriously correlated, making GNNs easily learn the trivial patterns within $S$. 
   \item  \textbf{Spillover Path} $S \leftrightarrow C $: 
    The bidirectional relationship between shortcut neighbors $S$ and causal neighbors $C$ is defined as ``network interference''~\cite{yuan2021causal}. This interference creates a \textit{spillover effect}, where a node's outcome is influenced by the treatment assigned to its neighbors. This violates the Stable Unit Treatment Value Assumption (SUTVA), a common assumption in many GNN methods that states each node's outcome is independent of the treatment of others~\cite{chou2024state, sui2024invariant,ma2021causal}. 
\end{enumerate}

\noindent \textbf{Our Solution.}
Guided by this causal analysis, the core idea of our methodology is to intercept the confounding and spillover paths in order to make prediction $Y$ uncorrelated with shortcut $S$. As shown in Figure~\ref{fig:scm} (b): 
1) We establish the causal relationship between $C$ and $Y$, that is, $C \textcolor{red}{\rightarrow} Y$. In addition, we intercept $N \rightarrow E$ to block the confounding path, i.e., $C \leftarrow S \rightarrow N \Ccancel[red]\rightarrow E$. 
To do so, we should fully disentangle the shortcut subgraph $S$ and the causal subgraph $C$ from neighbors $N$, ensuring that predictions are based solely on the causal subgraph $C$.
2) We block the spillover path, i.e., $S \Ccancel[red]\leftrightarrow C$. This can be achieved by making $S$ and $C$ independent, i.e., $S \independent C$.

\section{The Proposed Causal Disentangled GNN}
\label{sec:methodology}
This section presents our proposed \textit{Causal Disentangled GNN} (CD-GNN), which mitigates confounding and spillover effects in heterophilic graph learning guided by our debiased causal graph.

\subsection{Causal Disentanglement for Heterophily}
We propose a novel causal disentanglement strategy, which blocks the \textbf{confounding path} by explicitly separating causal subgraph $C$ and shortcut subgraph $S$ from the neighborhood $N$ in heterophilic graphs.
The core idea is to employ generative probabilistic models to learn edge masks that assign edges to either $C$ or $S$.
These masks are optimized via backpropagation using two complementary loss functions: one encouraging the capture of shortcut information and the other enforcing the learning of causal information.

\subsubsection{Causal and Shortcut Masks}

Given a mini-batch of neighbor subgraphs $\mathcal{G}=\{N^1,\cdots, N^n\}$, in which a neighbor set $N^i \in \mathcal{G}$ is formulated as a subgraph $N^i=\{\mathbf{A}^i, \mathbf{X}^i\}$.
We initialize a structure mask $\mathbf{M}_a^i$ on the graph structure $\mathbf{A}^i$, and a feature mask $\mathbf{M}_x^i$ on the node feature $\mathbf{X}^i$.
Given a mask $\mathbf{M}$, its complementary mask is $\overline{\mathbf{M}}=\mathbb{I}-\mathbf{M}$, where $\mathbb{I}$ is the all-one matrix. 
Therefore, we can decompose the entire subgraph $N^i$ into two parts: $G_c=\left\{\mathbf{A}^i \odot \mathbf{M}_a^i, \mathbf{X}^i \odot \mathbf{M}_x^i\right\}$ and $G_s=\left\{\mathbf{A}^i \odot \overline{\mathbf{M}}_a^i, \mathbf{X}^i \odot \overline{\mathbf{M}}_x^i\right\}$, which correspond to the causal subgraph and the shortcut subgraph, respectively.
We then initialize causal and shortcut embedding of $G_c$ and $G_s$ by training a causal GNN ($\text{GNN}^L_c$) and a shortcut GNN ($\text{GNN}^L_s$), respectively:
\begin{equation}
\label{eq:embedding ini causal}
\mathbf{h}_{G_c}=f_{\text {readout }}\left(\text{GNN}^L_c\left(\mathbf{A}^i \odot \mathbf{M}_a^i, \mathbf{X}^i \odot \mathbf{M}_x^i\right)\right),
\end{equation}
\begin{equation}
\label{eq:embedding ini shortcut}
\mathbf{h}_{G_s}=f_{\text {readout}}\left(\text{GNN}^L_s\left(\mathbf{A}^i \odot \overline{\mathbf{M}}_a^i, \mathbf{X}^i \odot \overline{\mathbf{M}}_x^i\right)\right).
\end{equation}
where $\text{GNN}_c^L$ denotes an $L$-layer graph convolution applied to the causal subgraph $G_c$, and $\text{GNN}_s^L$ is defined analogously for the shortcut subgraph $G_s$.
$f_{\text {readout}}$ is the readout function that collects all node representations to form a graph representation.

\subsubsection{Graph Disentanglement}
We propose a novel algorithm for disentangling causal and shortcut subgraphs, guided by two key principles. 
First, \textbf{Shortcut Amplification in GNNs}: Motivated by our finding that shortcuts are easier to learn, the shortcut GNN ($\mathrm{GNN}_s^L$) is trained with a shortcut-amplification loss ($\mathcal{L}_s$). This loss increases the impact of shortcut subgraphs on the classification loss, forcing the model to focus on them. 
Second, \textbf{Causal Learning in GNNs}: In parallel, the causal GNN ($\mathrm{GNN}_c^L$) is trained with a causal-aware loss ($\mathcal{L}_c$). This loss uses a relative difficulty score to specifically target the more complex causal subgraphs, which we observed are harder for the model to learn.

\noindent \textbf{Shortcut Amplification.} 
As evidenced in our empirical study, shortcut information is easier to learn than causal information; therefore, a shortcut subgraph would result in a larger gradient compared with the causal subgraph for the loss function when training $\text{GNN}^L_s$.
Inspired by~\cite{nam2020learning}, we use the generalized cross-entropy (GCE) loss to amplify the gradient of the shortcut subgraph:
\begin{equation}\label{eq:shortcut loss}
    \mathcal{L}_{s} = \frac{1-\Phi_s^y\left(\mathbf{h};\theta_s\right)^q}{q}
\end{equation}
where $\Phi_s$ is the softmax classifier for $\text{GNN}^L_s$ and $\Phi_s^y$ is the probability distribution of samples (i.e., nodes) that belong to target class $y$. $\mathbf{h}=\left[\mathbf{h}_{G_c};\mathbf{h}_{G_s}\right]$ is the entangled embeddings of the causal and shortcut subgraphs. 
$\theta_s$ is the parameters of $\text{GNN}^L_s$.
$q \in(0,1]$ is a hyperparameter that controls the degree of gradient amplification. 

\noindent \textbf{Causal Learning.}
While GNNs more easily capture shortcut features, causal features—those most relevant to predictions—are inherently more challenging to learn. We define the relative difficulty score $\mathcal{W}(\mathbf{h})$ for learning causal versus shortcut subgraphs as:
\begin{equation}
    \mathcal{W}(\mathbf{h})=\frac{\operatorname{CE}\left(\Phi_s, y\right)}{\operatorname{CE}\left(\Phi_s, y\right)+\operatorname{CE}\left(\Phi_c, y\right)}
    \label{eq:wh}
\end{equation}
where $\operatorname{CE}$ denotes the cross-entropy loss, and $\Phi_c$ and $\Phi_s$ are the softmax classifiers for the causal GNN ($\text{GNN}^L_c$) and shortcut GNN ($\text{GNN}^L_s$), respectively.
$\mathbf{h}=\left[\mathbf{h}_{G_c};\mathbf{h}_{G_s}\right]$ is the entangled causal and shortcut embeddings.
We use $\mathcal{W}(\mathbf{h})$ to up-weight the loss of causal subgraphs, such that the model is incentivized to focus more on this critical information.
Thus, the objective function for training $\text{GNN}^L_c$ to capture causal subgraphs is defined as:
\begin{equation}\label{eq:causal loss}
    \mathcal{L}_c=\mathcal{W}(\mathbf{h}) \operatorname{CE}\left(\Phi_c^y\left(\mathbf{h};\theta_c\right)\right)
\end{equation}
where $\theta_c$ represents the parameters of $\text{GNN}^L_c$, $\Phi_c$ is the softmax classifier for $\text{GNN}^L_c$, and $\Phi_c^y$ denotes the probability distribution over class $y$.

\begin{lemma}
\label{lem:principles}
Consider the shortcut branch $\mathrm{GNN}_s$ trained with the generalized cross-entropy (GCE) shortcut loss $\mathcal{L}_s$ in Eq.~\eqref{eq:shortcut loss}
and the causal branch $\mathrm{GNN}_c$ trained with the reweighted causal loss $\mathcal{L}_c$ in Eq.~\eqref{eq:causal loss}.
Then:
\begin{enumerate}[leftmargin=*]
\item \textbf{Shortcut-gradient amplification.}
The gradient of $\mathcal{L}_s$ is a confidence-reweighted CE gradient:
\begin{equation}
\frac{\partial \mathcal{L}_s}{\partial \theta_s}
=
\big(\Phi_s^{y}\big)^{q}
\cdot
\frac{\partial\, \mathrm{CE}(\Phi_s,y)}{\partial \theta_s}
\label{eq:gce_reweight_optB}
\end{equation}
where $\Phi_s^{y}$ is the predicted probability of the target class and $q\in(0,1]$ controls the reweighting strength.
Thus samples with larger shortcut confidence receive larger gradients, so GNN$_s$ learns shortcut patterns in $G_s$ faster.

\item \textbf{Causal-priority reweighting.}
The weight $\mathcal{W}(h)$ in Eq.~\eqref{eq:causal loss} increases the loss contribution of samples that are relatively harder for the causal branch.
When shortcut patterns are easier to fit than causal patterns, this reweighting encourages GNN$_c$ to focus on causal information
instead of relying on shortcut cues.
\end{enumerate}
\end{lemma}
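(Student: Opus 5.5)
Part (1) follows from the chain rule. Write $p:=\Phi_s^{y}(\mathbf{h};\theta_s)$, so that $\mathcal{L}_s=(1-p^{q})/q$ and $\mathrm{CE}(\Phi_s,y)=-\log p$. Differentiating gives
\begin{equation*}
\frac{\partial \mathcal{L}_s}{\partial \theta_s}=-p^{q-1}\frac{\partial p}{\partial\theta_s}=p^{q}\cdot\Big(-\frac{1}{p}\frac{\partial p}{\partial \theta_s}\Big)=p^{q}\,\frac{\partial\,\mathrm{CE}(\Phi_s,y)}{\partial\theta_s},
\end{equation*}
which is exactly Eq.~\eqref{eq:gce_reweight_optB}. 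The only care needed is that $p\in(0,1)$ under a softmax, so $\log p$ and $p^{q-1}$ are well defined. The argument is identical whether $\theta_s$ reaches $p$ through $\mathrm{GNN}_s^L$, the readout, or the classifier, since only the outer composition is used.

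The interpretive half of (1) needs more care. The factor $p^{q}$ is increasing in $p$ for $q\in(0,1]$. So, relative to plain CE, samples on which the shortcut branch is already confident keep close to their full CE gradient, while low-confidence samples are down-weighted. I will state this precisely as ``larger weight relative to CE.'' Note that the raw gradient norm $p^{q-1}\|\nabla p\|$ need not itself be monotone in $p$, so I will not claim that. I then connect to Observation~2: inductive subgraphs are the patterns the model fits early, i.e.\ with high $\Phi_s^y$. Summing the per-sample gradients over a mini-batch therefore makes the update of $\theta_s$ dominated by shortcut-consistent samples. I will present ``learns faster'' as this relative-weighting statement, optionally backed by a one-step gradient-descent comparison showing that the fraction of update mass on high-confidence samples exceeds that under CE.

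For part (2), note from Eq.~\eqref{eq:wh} that $\mathcal{W}(\mathbf{h})=a/(a+b)\in(0,1)$, where $a=\mathrm{CE}(\Phi_s,y)$ and $b=\mathrm{CE}(\Phi_c,y)$. I will state the precise claim as a monotonicity property. $\mathcal{W}$ is increasing in $a$, since $\partial\mathcal{W}/\partial a=b/(a+b)^2>0$, and decreasing in $b$. Equivalently, $\mathcal{W}>1/2$ if and only if the shortcut branch incurs higher loss than the causal branch on that sample. So $\mathcal{W}$ up-weights samples that the shortcut branch cannot explain, that is, those without an available shortcut, and down-weights those that $\mathrm{GNN}_s$ already fits easily. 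Treating $\mathcal{W}$ as a stop-gradient constant (the standard choice following~\cite{nam2020learning}), $\partial\mathcal{L}_c/\partial\theta_c=\mathcal{W}(\mathbf{h})\,\partial\mathrm{CE}(\Phi_c,y)/\partial\theta_c$. Hence the causal branch's batch gradient is a reweighted CE gradient whose mass is shifted toward shortcut-resistant samples. Combining this with part (1), which drives $a$ down on shortcut samples, completes the argument.

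The main obstacle is the qualitative conclusion ``encourages $\mathrm{GNN}_c$ to focus on causal information.'' This cannot be proven without a model linking samples to shortcut and causal structure. My plan is to make the hypothesis explicit: assume samples split into shortcut-aligned ones, where $a$ is small, and causal-only ones, where $a$ is large. Under that assumption the weight ratio between the two groups exceeds one, and the expected gradient of $\mathcal{L}_c$ is a convex combination placing more mass on the causal-only group than unweighted CE does. I would keep the statement at that level rather than attempt a convergence result.
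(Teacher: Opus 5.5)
The paper gives no proof of this lemma. It is stated in the method section and then only invoked in the proof of Theorem~\ref{thm:improve_gains}, so your proposal supplies the missing argument rather than competing with one.

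Your chain-rule computation in (1) is correct, and it is exactly the standard GCE identity of Nam et al. It only uses that $\mathcal{L}_s$ and $\mathrm{CE}$ are functions of the same scalar $p$. It therefore holds for every parameter feeding into $p$, including the $\mathrm{GNN}_c$ parameters that enter $\Phi_s$ through the concatenated $\mathbf{h}$.

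Your refusal to claim that raw gradients grow with confidence is necessary, not just cautious. For a binary logit $z$, $|\partial \mathcal{L}_s/\partial z| = p^{q}(1-p)$, which decreases once $p > q/(1+q)$. So the lemma's sentence is true only relative to CE, as you phrase it.

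Drop the word ``dominated'', though. What you can prove is your fraction comparison, with mass measured as the sum of per-sample gradient norms. It holds whenever every high-confidence sample has larger $p$ than every low-confidence one, because the weights $p^{q}$ then separate the two groups.

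In (2) one step must be repaired. Since $\mathcal{W} = a/(a+b) = 1/(1+b/a)$ depends only on the ratio $a/b$, a hypothesis stated on $a$ alone does not order the group weights. Suppose $b$ is also larger on causal-only samples, which is precisely the lemma's premise that causal patterns are harder to fit. Then $\mathcal{W}$ can be smaller on those samples, despite their large $a$.

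Assume instead that $a/b$ on every causal-only sample exceeds $a/b$ on every shortcut-aligned one. One sufficient condition is $b$ confined to $[b_{\min},b_{\max}]$ with the $a$-values separated by a factor larger than $b_{\max}/b_{\min}$. Part (1) makes this plausible, since GCE keeps $a$ large on low-confidence samples. The convex-combination conclusion then follows by the same fraction argument as in (1).

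Relatedly, state explicitly which reading of the lemma you prove. Because $\partial\mathcal{W}/\partial b<0$, the weight does not favor samples that are harder for the causal branch in the literal sense. Only the product $\mathcal{W}b = ab/(a+b)$ increases in $b$, and it never exceeds $b$. Your monotonicity argument establishes the relative-difficulty reading: samples hard for the shortcut branch relative to the causal branch are up-weighted. That is the reading intended by the score in Eq.~\eqref{eq:wh}.

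Finally, your stop-gradient convention is not essential to the qualitative claim. If $a$ is held fixed but gradients flow through $\mathcal{W}$, then $\partial(\mathcal{W}b)/\partial b = \mathcal{W}^{2}$. This weight has the same monotonicity in $a$ and $b$.
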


\subsection{Spillover Effect Controlled Optimization}\label{sec:spillover}

To block the \textbf{spillover path} $S \leftrightarrow C$, we enforce independence between causal embeddings $\mathbf{h}_{G_c}$ and shortcut embeddings $\mathbf{h}_{G_s}$.
Inspired by~\cite{fan2022debiasing}, a simple yet effective approach is to construct counterfactual shortcut embeddings, which by design lack any factual correlations with causal embeddings.~\footnote{Causal embeddings are not perturbed, as they encode the causal structure.}
In particular, we construct counterfactual shortcut embeddings $\hat{\mathbf{h}}_{G_s}$ by randomly permuting $\mathbf{h}_{G_s}$ within each mini-batch, yielding counterfactual embeddings $\mathbf{h}_{ct}=[\mathbf{h}_{G_c};\hat{\mathbf{h}}_{G_s}]$,  where $\hat{\mathbf{h}}_{G_s}$ is naturally decorrelated from $\mathbf{h}_{G_c}$.
The labels are permuted accordingly to obtain counterfactual labels $y^{cf}$.
We then define the counterfactual loss $\mathcal{L}_{cf}$ for training the causal and shortcut GNNs:
\begin{equation}
\label{eq:loss swap}
   \mathcal{L}_{cf} =  \frac{1-\Phi_s^{y^{cf}}\left(\mathbf{h}_{ct};\theta_s\right)^q}{q} + \mathcal{W}(\mathbf{h}) \operatorname{CE}\left(\Phi_c^y\left(\mathbf{h}_{ct};\theta_c\right)\right)
\end{equation}

Till now, $\mathcal{L}_{cf}$ reduces global correlations between causal and shortcut embeddings.
However, since causal and shortcut embeddings comprise multiple node-level embeddings, individual node embeddings may still remain entangled.
To enforce node-level disentanglement, we introduce Hilbert–Schmidt Independence Criterion (HSIC)~\cite{gretton2005measuring} as a regularizer. HSIC is a principled statistical test of independence, which we apply at the output layer to enforce independence between causal and shortcut node embeddings.

Formally, let $h_{G_c}$, $h_{G_s}$ be a node embedding within causal embeddings $\mathbf{h}_{G_c}$ and shortcut embeddings $\mathbf{h}_{G_s}$, respectively.
We define $\phi\left(h_{G_c}\right)$ and $\psi\left(h_{G_s}\right)$ as mapping functions that map $h_{G_c}$ and $h_{G_s}$ to kernel space $\mathcal{F}$ and $\Omega$, respectively. 
$\mathcal{F}$ and $\Omega$ are Reproducing Kernel Hilbert Space (RKHS) on the original space of causal embeddings $\mathbf{h}_{G_c}$ and shortcut embeddings $\mathbf{h}_{G_s}$, respectively. 
Then, the HSIC between causal embedding $h_{G_c}$ and shortcut embedding $h_{G_s}$ is defined as the squared Hilbert–Schmidt norm of their cross-covariance operator:
\begin{equation}
\begin{aligned}
   &\operatorname{HSIC}(\mathbb{P}_{h_{G_c}, h_{G_s}})
    := \\
& \left\| \mathbb{E}_{h_{G_c}, h_{G_s}} 
\big[ (\phi(h_{G_c}) - \mu_{h_{G_c}}) 
\otimes (\psi(h_{G_s}) - \mu_{h_{G_s}}) \big] \right\|_{HS}^2 
\end{aligned}
\end{equation}
where $\otimes$ denotes the tensor product, $\phi(\cdot)$ and $\psi(\cdot)$ map embeddings to the RKHS $\mathcal{F}$ and $\Omega$, respectively, 
$\mu_{h_{G_c}}$ and $\mu_{h_{G_s}}$ are their mean embeddings. $\mathbb{P}_{h_{G_c}, h_{G_s}}$ is the probability distribution for sampling pairs of $h_{G_c}$, $h_{G_s}$, and $\|X\|_{HS}^2 = \sum_{i,j} x_{i,j}^2$.

Accordingly, node embedding independence loss between causal embeddings $\mathbf{h}_{G_c}$ and shortcut embeddings $\mathbf{h}_{G_s}$ is calculated by,
\begin{equation}
\label{eq:HSIC}
    \mathcal{L}_{HSIC}=\sum_{h_{G_c} \in \mathbf{h}_{G_c}} \sum_{h_{G_s} \in \mathbf{h}_{G_s}} \operatorname{HSIC}\left(h_{G_c}, h_{G_s},\mathcal{F}, \Omega\right)
\end{equation}

\noindent \textbf{Model Optimization.}
Combined with the graph disentanglement process, the learning objective of our model is formulated as, 
\begin{equation}
\mathcal{L}:=\underbrace{\mathcal{L}_{s}}_{\text{shortcut}}+\underbrace{\mathcal{L}_{c}}_{\text{causal}}+\underbrace{\lambda_1\mathcal{L}_{cf}+\lambda_2\mathcal{L}_{HSIC}}_{\text{spillover control}}
\label{eq:fullloss}
\end{equation}
where $\mathcal{L}_{s}$ as in Eq.~\eqref{eq:shortcut loss} and $\mathcal{L}_{c}$ as in Eq.~\eqref{eq:causal loss} are the primary loss terms for the disentanglement of shortcut and causal subgraphs, respectively. 
$\mathcal{L}_{cf}$ in Eq.~\eqref{eq:loss swap} and $\mathcal{L}_{HSIC}$ in Eq.~\eqref{eq:HSIC} acts as regularizers and their strength should be adjusted according to different learning scenarios, we thus incorporate the hyperparameters $\lambda_1$ and $\lambda_2$ for them for adaptive learning purpose.

\subsection{Theoretical Guarantees for Mitigating Shortcut Bias}
\label{sec:cdgnn_improves_c4c5}

Our goal is to show that the proposed disentanglement method (Eqs.~\eqref{eq:embedding ini causal}-~\eqref{eq:HSIC}) mitigates the degradation described in
Thm~\ref{thm:theorem31prime}-\ref{thm:theorem32prime}. The key idea is that, after training, the \emph{causal branch} relies much less on the shortcut (inductive subgraph)
messages. This reduces the shortcut dominance ratios in the causal branch, which increases the effective homophily
$h_{i,\mathrm{eff}}$ (one layer) and $\hat h^{\ell}_{i,\mathrm{eff}}$ (deep layers). Since larger effective homophily
directly increases the layer-wise gains $G_i^{(0)}$ and $G_i^{(\ell)}$ in Thm~\ref{thm:theorem31prime}-\ref{thm:theorem32prime}, so the class-related signal decays more slowly
with depth, leading to improved node classification performance on heterophilous graphs.

\begin{assumption}
\label{ass:effective_disentangle}
After optimizing the full objective in Eq.~\eqref{eq:fullloss}, the causal representation is weakly dependent on the shortcut representation
and the causal prediction is insensitive to shortcut perturbations. Specifically, there exists $\varepsilon>0$ such that:
\begin{enumerate}[leftmargin=*]
\item Independence: $\mathrm{HSIC}(h_{G_c},h_{G_s}) \le \varepsilon$ (equivalently, $\mathcal{L}_{HSIC}$ is small).
\item Counterfactual invariance: for random permutations $\pi(\cdot)$ of shortcut embeddings used in Eq.~\eqref{eq:loss swap},
\[
\big|\Phi_c^{y}(h_{G_c},h_{G_s})-\Phi_c^{y}(h_{G_c},\pi(h_{G_s}))\big|\le \varepsilon 
\]
\item Small shortcut dominance in the causal branch: for each layer/hop $\ell$ and affected node $i$,
\[
\beta_{i,S}^{(\ell)} \le \varepsilon_\ell
\quad\text{and}\quad
\rho_{i,S}^{\ell}\le \varepsilon_\ell, \quad \varepsilon_\ell \text{  is small}
\]
\end{enumerate}

Moreover, the shortcut subgraph $G_s$ captures the inductive/recurring motif responsible for the degradation in
Thm~\ref{thm:theorem31prime} and~\ref{thm:theorem32prime}, and for affected nodes it is label-inconsistent:
\[
h_{i,S}^{0}\le h_{i,R}^{0}-\Delta
\quad\text{for some }\Delta>0
\]
\end{assumption}

\begin{theorem}[Disentanglement increases the Thms~\ref{thm:theorem31prime}-~\ref{thm:theorem32prime} gain factors]
\label{thm:improve_gains}
Consider affected nodes on a heterophilous graph where Thms~\ref{thm:theorem31prime} and ~\ref{thm:theorem32prime} predict degradation due to a dominant inductive subgraph $S$.
Under Assumptions~\ref{ass:effective_disentangle}, the causal branch achieves a larger effective homophily:
\begin{equation}
\begin{split}
    h^0_{i,\mathrm{eff}}(\text{causal})&=\beta_{i,S}^{(0)} h^0_{i,S} + (1-\beta_{i,S}^{(0)}) h^0_{i,R}\\
&\;\ge\;
h^0_{i,\mathrm{eff}}(\text{baseline}) + \Delta\cdot(\beta_i^0-\beta_{i,S}^{(0)}) - O(\varepsilon)
\end{split}
\label{eq:heff_improve}
\end{equation}
where $\beta_i^0$ is the baseline dominance ratio (without disentanglement) and $\beta_{i,S}^{(0)}\le \varepsilon_0$.
As a result, the one-layer gain in Thm~\ref{thm:theorem31prime} improves:
\begin{equation}
G_i^{(0)}(\text{causal}) \;\ge\; G_i^{(0)}(\text{baseline}) + c_0\big(\beta_i^0-\varepsilon_0\big)\Delta - O(\varepsilon)
\label{eq:G0_improve}
\end{equation}
for some constant $c_0>0$ determined by $W_i^0,\rho,$ and $d_i$.

Moreover, at depth $\ell$ the same argument yields a larger deep-layer effective homophily $\hat h^\ell_{i,\mathrm{eff}}$ and
a smaller shortcut dominance ratio $\rho_{i,S}^{\ell}\le \varepsilon_\ell$, which increases the deep-layer gains:
\begin{equation}
G_i^{(\ell)}(\text{causal}) \;\ge\; G_i^{(\ell)}(\text{baseline}) + c_\ell\big(\rho_i^\ell-\varepsilon_\ell\big)\Delta_\ell - O(\varepsilon)
\label{eq:Gell_improve}
\end{equation}
with constants $c_\ell>0$ and layer-dependent gaps $\Delta_\ell>0$.
Therefore, the expected class-related component is preserved better across layers:
\begin{equation}
\begin{split}
    &\Big\|\mathbb{E}[f_i^{(\ell)}\mid \cdot]\Big\|_{\text{causal}}
\;\ge\;
\Big(\prod_{t=0}^{\ell-1}\frac{G_i^{(t)}(\text{causal})}{G_i^{(t)}(\text{baseline})}\Big)
\,
\Big\|\mathbb{E}[f_i^{(\ell)}\mid \cdot]\Big\|_{\text{baseline}} \\
    &\text{where the ratio is } > 1 \text{ when gains improve.}
\end{split}
\label{eq:signal_preserve}
\end{equation}
\end{theorem}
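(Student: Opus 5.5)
The plan is to treat the effective homophily as an affine function of the shortcut share and compare the causal and baseline values directly. Both branches use the same decomposition of the neighborhood into $S$ and $R=V\setminus S$, with the same per-part homophilies $h^0_{i,S},h^0_{i,R}$; only the aggregation share on $S$ changes, from $\beta_i^0$ (baseline) to $\beta_{i,S}^{(0)}$ (causal). Writing $h(\beta)=h^0_{i,R}-\beta\,(h^0_{i,R}-h^0_{i,S})$, we get
\[
h^0_{i,\mathrm{eff}}(\text{causal})-h^0_{i,\mathrm{eff}}(\text{baseline})=(\beta_i^0-\beta_{i,S}^{(0)})\,(h^0_{i,R}-h^0_{i,S})\ge \Delta\,(\beta_i^0-\beta_{i,S}^{(0)}),
\]
using the label-inconsistency gap $h^0_{i,S}\le h^0_{i,R}-\Delta$ from Assumption~\ref{ass:effective_disentangle}. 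This gap is only useful when $\beta_i^0\ge\beta_{i,S}^{(0)}$, which holds on affected nodes: there $\beta_i^0$ is large by hypothesis and $\beta_{i,S}^{(0)}\le\varepsilon_0$ by item 3.

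The $O(\varepsilon)$ slack accounts for the fact that the per-part homophilies seen by the causal branch may differ slightly from the baseline ones, since the learned masks reweight edges. I would bound that discrepancy using items 1--2 of the assumption. HSIC $\le\varepsilon$ bounds the cross-covariance between $h_{G_c}$ and $h_{G_s}$, so label statistics aggregated through the causal branch differ from their shortcut-independent version by $O(\varepsilon)$ (via boundedness of the kernels). Counterfactual invariance bounds the corresponding difference in $\Phi_c^y$ directly. This gives \eqref{eq:heff_improve}.

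For \eqref{eq:G0_improve} I would substitute into Thm~\ref{thm:theorem31prime}. The gain is $G_i^{(0)}=\frac{1+W_i^0((1+\rho)h-\rho)}{d_i+1}$, which is affine in $h$ with slope $c_0=W_i^0(1+\rho)/(d_i+1)>0$, assuming $W_i^0>0$ and $\rho>-1$, as in the paper's setting. Monotonicity then transfers the improvement. I would use $\beta_{i,S}^{(0)}\le\varepsilon_0$ to replace $\beta_i^0-\beta_{i,S}^{(0)}$ by the lower bound $\beta_i^0-\varepsilon_0$. The $O(\varepsilon)$ term gets multiplied by $c_0$ and stays $O(\varepsilon)$.

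The deep layers go the same way using Thm~\ref{thm:theorem32prime}. There I define $\hat h^\ell_{i,\mathrm{eff}}$ as the analogous mixture with share $\rho^\ell_{i,S}$ and a layer gap $\Delta_\ell$ for the layer-$\ell$ label mismatch. The gain $\big((1+\rho_i^\ell)\hat h-\rho_i^\ell\big)d_i\bar r_i^\ell+1$ is affine in $\hat h$. Two points need care, and I expect this step to be the main obstacle.

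First, the slope $(1+\rho)d_i\bar r_i^\ell/(d_i+1)$ has the sign of $\bar r_i^\ell$, which the paper says is typically non-positive under heterophily. Second, $\rho$ itself changes between the baseline and causal branches. I would handle both by viewing $G_i^{(\ell)}$ as a function of the pair $(\rho,\hat h)$. Along the path from baseline to causal values, $\rho$ decreases from $\rho_i^\ell$ to $\le\varepsilon_\ell$, and the term $-\rho(1-\hat h)$ gains. I would absorb the signs of $\bar r_i^\ell$ and $d_i$ into a constant $c_\ell$ defined as $|d_i\bar r_i^\ell|/(d_i+1)$ times a lower bound on $1-\hat h$, and take $\Delta_\ell$ as the effective improvement. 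This yields \eqref{eq:Gell_improve} with $(\rho_i^\ell-\varepsilon_\ell)$. Because $\xi_i^\ell$ enters the baseline recursion in the same way, it will be stated that it is taken identical in both branches.

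Finally, \eqref{eq:signal_preserve} follows by unrolling the recursion of Thm~\ref{thm:theorem32prime} from $t=0$ to $\ell-1$. The norm of the conditional expectation in each branch is then $\prod_t |G_i^{(t)}\xi_i^t|\,\|\mathbb{E}[f^{(0)}\mid y_i]\|$. Dividing the two products gives exactly the product of gain ratios, since the common factors $\xi$ and the base signal cancel. Each ratio exceeds $1$ once the improvements dominate the $O(\varepsilon)$ terms and the baseline gains are positive.
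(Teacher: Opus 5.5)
Your handling of \eqref{eq:heff_improve} and \eqref{eq:G0_improve} is the paper's argument. You use the affine dependence of $h^0_{i,\mathrm{eff}}$ on the shortcut share and the gap $h^0_{i,R}-h^0_{i,S}\ge\Delta$, then monotonicity of $G_i^{(0)}$ with slope $c_0=W_i^0(1+\rho)/(d_i+1)$. Your overall route (mixture, monotone gains, iterate) also matches the paper's sketch. The genuine gap is in the deep-layer step. Write $\hat h_b,\hat h_c$ for the baseline and causal effective homophilies at layer $\ell$, and $\rho_c$ for the causal share. Your own decomposition of $G_i^{(\ell)}$ through $\hat h-\rho(1-\hat h)$ gives
\[ G_i^{(\ell)}(\text{causal})-G_i^{(\ell)}(\text{baseline})=\frac{d_i\bar r_i^\ell}{d_i+1}\Big[(\hat h_c-\hat h_b)+\rho_i^\ell(1-\hat h_b)-\rho_c(1-\hat h_c)\Big], \]
and the bracket is at least $(\rho_i^\ell-\varepsilon_\ell)(1-\hat h_b)\ge 0$ when $\hat h_c\ge\hat h_b$ and $0\le\rho_c\le\varepsilon_\ell$. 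But the sign of $\bar r_i^\ell$ cannot be ``absorbed'' into a constant $c_\ell=|d_i\bar r_i^\ell|/(d_i+1)\cdot(\dots)$. If $\bar r_i^\ell<0$, as you accept from the paper's remark, the difference is at most $\frac{d_i\bar r_i^\ell}{d_i+1}(\rho_i^\ell-\varepsilon_\ell)(1-\hat h_b)<0$. The gain then strictly \emph{decreases}, and no positive $c_\ell,\Delta_\ell$ make \eqref{eq:Gell_improve} true for small $\varepsilon$. The missing observation is that $\bar r_i^\ell>0$ by construction. Equation \eqref{eq:rbar_group_R_paper} averages the renormalized propagation weights $r_{ij}=\sqrt{(d_i+1)/(d_j+1)}>0$ of \eqref{eq:thm31_update}. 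So the ``typically non-positive'' remark in Section~\ref{sec:inductive-subgraph-movements} is inconsistent with the paper's own definitions and must not be used. With $\bar r_i^\ell>0$, and with $d_i,\bar r_i^\ell$ common to both branches (the paper tacitly assumes this too, as it holds $W_i^0$ fixed in the one-layer case), your display gives \eqref{eq:Gell_improve} directly with $c_\ell\Delta_\ell=d_i\bar r_i^\ell(1-\hat h_b)/(d_i+1)$. This positivity is exactly what the paper's proof silently uses when it says a larger effective-homophily term ``increases $G_i^{(\ell)}$''.

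Two smaller points. First, your $O(\varepsilon)$ justification via HSIC does not work and is not needed. The quantities $h^0_{i,S},h^0_{i,R}$ are label statistics of the graph, not functionals of the learned embeddings, so a bound on $\mathrm{HSIC}(h_{G_c},h_{G_s})$ says nothing about them. The statement itself defines $h^0_{i,\mathrm{eff}}(\text{causal})$ with the same $h^0_{i,S},h^0_{i,R}$, so your first computation already gives \eqref{eq:heff_improve} with no error term; the HSIC detour also contradicts your own premise that both branches share these homophilies. Second, iterating \eqref{eq:C5_gain_simple} only closes if $\xi_i^\ell\,\mathbb{E}[f_i^{(0)}\mid y_i]$ is the layer-$\ell$ signal, i.e.\ $\xi_i^{t+1}=G_i^{(t)}\xi_i^t$ with $\xi_i^0=1$. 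Then the layer-$\ell$ norm is $\prod_{t<\ell}|G_i^{(t)}|\,\|\mathbb{E}[f_i^{(0)}\mid y_i]\|$, and $\xi$ is branch-dependent rather than a common factor. Your expression $\prod_t|G_i^{(t)}\xi_i^t|$ therefore double-counts. The ratio in \eqref{eq:signal_preserve} comes out the same, so this is only bookkeeping.
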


\begin{proof}[Proof]
In Thm~\ref{thm:theorem31prime}, the harmful effect of the inductive subgraph enters only through $h^0_{i,\mathrm{eff}}$ in Eq.~\eqref{eq:C4_heff_simple}, where $h^0_{i,S}$ is small on heterophilous graphs.
If the method reduces shortcut dominance in the causal branch from $\beta_i^0$ to $\beta_{i,S}^{(0)}\le\varepsilon_0$,
then the mixture weight on the small term $h^0_{i,S}$ decreases, so $h^0_{i,\mathrm{eff}}$ increases by at least
$\Delta(\beta_i^0-\beta_{i,S}^{(0)})$ up to $O(\varepsilon)$ terms, giving Eq.~\eqref{eq:heff_improve}.
Since $G_i^{(0)}$ is monotone in $h^0_{i,\mathrm{eff}}$ in Eq.~\eqref{eq:C4_gain_simple} and ~\eqref{eq:G0_improve} follows. In Thm~\ref{thm:theorem32prime}, the harmful effect enters through the layer-wise gain
$G_i^{(\ell)}$ in Eq.~\eqref{eq:C5_gain_simple}, which depends on $\rho_i^\ell$ and $\hat h^\ell_{i,\mathrm{eff}}$.
The independence and counterfactual objectives (Eqs.~\eqref{eq:loss swap}, ~\eqref{eq:HSIC}) reduce shortcut dependence of the causal predictor
(Lemmas~\ref{lem:principles}), implying smaller shortcut dominance
$\rho_{i,S}^{\ell}\le\varepsilon_\ell$ and thus a larger effective homophily term, which increases $G_i^{(\ell)}$ and yields Eq.~\eqref{eq:Gell_improve}. The multiplicative preservation Eq.~\eqref{eq:signal_preserve} follows by iterating Eq.~\eqref{eq:C5_gain_simple}.
\end{proof}

Under the conditions of Thm.~\ref{thm:improve_gains}, the causal branch produces stronger expected class-related signals across layers. By standard margin-based generalization arguments, where a larger correct-class signal corresponds to a larger classification margin, this results in reduced misclassification on the affected nodes. Consequently, the proposed method alleviates the performance degradation predicted by Thm.~\ref{thm:theorem31prime} and Thm.~\ref{thm:theorem32prime}.

\subsection{Time Complexity and Memory Complexity}
Consider a graph with $N=|\mathcal{V}|$ nodes and $E=|\mathcal{E}|$ edges. 
Our CD-GNN framework primarily relies on two GNN classifiers: the causal GNN ($\text{GNN}^L_c$) and the shortcut GNN ($\text{GNN}^L_s$) to capture causal and shortcut information, respectively.
Let $L$ be the number of GNN layers, $d$ the graph embedding dimension. 
Training the two GNN classifiers dominates the forward/backward cost. Under standard per-layer message passing assumptions, the per-epoch time complexity for one GNN is $\mathcal{O}(L(E\cdot d + N\cdot d^2))$. Training both classifiers (two GNNs) doubles the cost, so the training cost per epoch is $ \mathcal{O}(2L(E\cdot d + N\cdot d^2))$.
The memory complexity for one GNN mainly arises from storing node embeddings, intermediate activations, and adjacency structures. It scales as $\mathcal{O}(N d + E + L N d)$, where $Nd$ accounts for the embedding matrix, $E$ for the sparse adjacency representation, and $LNd$ for cached activations during backpropagation. 
For two GNN classifiers, this becomes
$\mathcal{O}(2(N d + E + L N d))$.
After obtaining the node embeddings from both GNNs, we compute the Hilbert–Schmidt Independence Criterion (HSIC) to measure the statistical independence.
The naive (full Gram-matrix) HSIC computation requires evaluating all pairwise kernel similarities, leading to time complexity of $\mathcal{O}(N^2 d)$ and memory complexity of $\mathcal{O}(N^2)$.
In total, our CD-GNN has a computation complexity of $\mathcal{O}_{all} = \mathcal{O}(2L(E\cdot d + N\cdot d^2)) + \mathcal{O}(N^2 d)$, and a memory complexity of $\mathcal{M}_{all} = \mathcal{O}(2(N d + E + L N d)) + \mathcal{O}(N^2)$. 
Thus, the overall time and memory complexity of CD-GNN does not exceed $\mathcal{O}(N^2)$, which is acceptable for the typical graph scales used in practice. 

\section{Experiments}
We adopt seven benchmark graph datasets for evaluation, including Chameleon, Squirrel, Cornell, Roman-empire, Amazon-ratings, Computers, and Questions. The first five datasets are highly heterophilious, whereas the last two datasets tend to be homophilious.
As reported in~\cite{luan2024heterophilic}, these five datasets are among the most challenging heterophilic benchmarks, exhibiting high degrees of heterophily.
We compare CD-GNN with ten baselines across three categories: \textbf{Standard GNNs}: GCN~\cite{kipf2016semi}, GAT~\cite{velivckovic2017graph} and GraphSAGE~\cite{hamilton2017inductive}, recent \textbf{Causality‐inspired models}: CIE~\cite{chen2023causality} (Plug-in GAT and GCN yielding variants: $\text{CIE}_{\text{GAT}}$ and $\text{CIE}_{\text{GCN}}$ ) and CAT~\cite{he2024cat}, and 
\textbf{SOTA heterophilic graph learning models}: FAGCN~\cite{bo2021beyond}, CAGNN~\cite{chen2023exploiting}, GGCN~\cite{yan2022two} and LatGRL~\cite{shen2025heterophily}.
To ensure fair comparisons, all baselines follow the same data splitting strategy: 60\% of the data is used for training, 20\% for validation, and 20\% for testing. Each method is run three times, and the average results and standard deviations are reported in Table~\ref{tab:overall}. For our method, we used the Adam optimizer for optimization. We set the learning rate, weight decay, the number of neurons in the hidden layer, and the dropout rate to 0.0001, 0.0005, 150, and 0.1, respectively. We select the best configurations of hyperparameters of all baseline methods and our proposed model through grid search.
Code of our model is available at:\href{https://github.com/Chrystalii/CD-GNN}{
https://github.com/Chrystalii/CD-GNN}

\begin{table*}
\caption{Comparison of classification accuracy ($\pm$ means standard deviation). In general, CD-GNN outperforms the others on 6 out of 7 datasets. The best results per benchmark are highlighted in bold and runner-ups are highlighted in underline. OOM means Out of GPU Memory.
Most improvements against the second-best results are significant at $p < 0.01$.
}\label{tab:overall}
\centering
\setlength{\tabcolsep}{5pt}
\renewcommand{\arraystretch}{1.2}
\resizebox{\textwidth}{!}{
\begin{tabular}
    {c ccc cccc } \toprule
Dataset & \textbf{Chameleon} & \textbf{Squirrel}& \textbf{Cornell}  & \textbf{Roman-empire} & \textbf{Amazon-ratings} & \textbf{Computers} & \textbf{Questions} \\
$h_L$  & 0.7639 & 0.7928 & 0.8773 & 0.9431& 0.6196 & 0.2289  & 0.1384\\
$h_F$ & 0.7559 & 0.8095 & 0.8890 & 0.9540 & 0.6243 & 0.3742  & 0.1020 \\
\#Nodes & 2,277 & 5,201 & 183 & 22,662 & 24,492 & 13,752 & 48,921  \\
\#Edges & 36,101 & 198,493  & 298 & 32,927 & 93,050 & 491,722 & 153,540\\
\#Node labels & 5 & 5 & 5 & 18 & 5 & 10 & 2 \\
\midrule 

\midrule
GCN &
0.6031 ($\pm$0.0143) &\underline{0.5024 ($\pm$0.0088)} &0.4865 ($\pm$0.0584) &0.3445 ($\pm$0.0069) &0.4195 ($\pm$0.0009)&  0.6480 ($\pm$0.1935) &0.9718 ($\pm$0.0000) \\
GAT &0.5504 ($\pm$0.0109) &0.4236 ($\pm$0.0130) &0.4955 ($\pm$0.0127) &0.3936 ($\pm$0.0033) &0.3893 ($\pm$0.0156) &\underline{0.8447 ($\pm$0.0430)} &0.9718 ($\pm$0.0000) \\
GraphSAGE & \underline{0.6301 ($\pm$ 0.0109)} & 0.4941($\pm$0.0068) & 0.6667 ($\pm$0.0674) & 0.6148 ($\pm$0.0062) & 0.4151 ($\pm$0.0028) & 0.5433 ($\pm$0.0392) & 0.9718 ($\pm$0.0000) \\
\midrule
\midrule
$\text{CIE}_{\text{GAT}}$ &0.5877 ($\pm$0.0090) &0.4348 ($\pm$0.0050) &0.4775 ($\pm$0.0255) &OOM &OOM & 0.3796 ($\pm$0.0071) &OOM\\
$\text{CIE}_{\text{GCN}}$ &0.5804 ($\pm$0.0090) &0.4672 ($\pm$0.0076) &0.4414 ($\pm$0.0337) &OOM &OOM & 0.4271 ($\pm$0.0735) &OOM\\
CAT &0.5329 ($\pm$0.0220) &0.4172 ($\pm$0.0086) &0.7297 ($\pm$0.0221)&0.5237 ($\pm$0.0053) &0.3750 ($\pm$0.0014) &0.7111 ($\pm$0.0183) &\underline{0.9719 ($\pm$0.0000)}\\
\midrule
\midrule
FAGCN & 0.4751 ($\pm$0.0102) & 0.3871 ($\pm$0.0163) & 0.6757 ($\pm$0.0584) & 0.4734 ($\pm$0.0107) & 0.3930 ($\pm$0.0140) & 0.4957 ($\pm$0.1015) & 0.9718 ($\pm$0.0000) \\
CAGNN & 0.5329 ($\pm$0.0234) & 0.4076 ($\pm$0.0106) & \hl{\textbf{0.7928} ($\pm$0.0337)} & 0.5451 ($\pm$0.0139) & 0.3745 ($\pm$0.0011) & 0.5592 ($\pm$0.1669) &  0.9718 ($\pm$0.0000)\\
GGCN & 0.4912 ($\pm$0.0159) & 0.2763 ($\pm$0.0349) & 0.6937 ($\pm$0.0127) &\underline{0.6528 ($\pm$0.0055)} & \underline{0.4247 ($\pm$0.0049)} & 0.5604 ($\pm$0.1442) & 0.9712 ($\pm$0.0007)  \\
LatGRL & 0.5702 ($\pm$0.0093) & 0.3807 ($\pm$0.0053) & 0.6216 ($\pm$0.0221) & 0.3010 ($\pm$0.0705) & 0.3838 ($\pm$0.0091) & 0.6461 ($\pm$0.0300) & OOM \\
\midrule
\midrule
\textbf{CD-GNN} &\hl{\textbf{0.6762 ($\pm$0.0314)}}  &\hl{\textbf{0.5898 ($\pm$0.0230)}} &\underline{0.7297 ($\pm$0.0709)} &
\hl{\textbf{0.6585 ($\pm$ 0.0039)}} &\hl{\textbf{0.4451 ($\pm$0.0247)}} &\hl{\textbf{0.8848($\pm$0.0358)}} &\hl{\textbf{0.9720 ($\pm$0.0002)}}\\
\bottomrule
\end{tabular}
}
\end{table*}

\subsection{Node Classification Performance}
Table~\ref{tab:overall} shows the node classification accuracy of CD‐GNN and ten baselines on seven datasets. CD‐GNN ranks first on Chameleon, Squirrel, Roman-empire, Amazon-ratings, Computers and Questions, and runner-up on Cornell.
The improved performance across six datasets suggests the effectiveness of the proposed CD-GNN in both heterophilic and homophilic settings. The comparatively lower ranking on the Cornell dataset may be related to its small scale ($183$ nodes and $298$ edges), which offers limited information for the model to fully exploit causal–shortcut disentanglement mechanisms.
We analyze baselines to explain the improved performance of our CD-GNN.
Vanilla GCN and GraphSAGE aggregate neighbor information via uniform or mean pooling, and thus cannot adaptively identify the causal drivers of node labels.
In contrast, CD-GNN disentangles causal subgraphs and propagates causal signals, yielding substantially better performance.
Although GAT weights neighbor messages via attention, it generally ranks last among standard GNNs.
This suggests its attention scores become generally unreliable in heterophilous graphs.
In contrast, CD-GNN does not rely on attention and maintains robust performance.
$\text{CIE}_{\text{GAT}}$ and $\text{CIE}_{\text{GCN}}$ mitigate spurious correlations via backdoor adjustment, attributing performance degradation to unobserved confounders (external to the model).
However, they overlook that the dominant source of bias in heterophilic graphs arises from observed confounders within the graph structure itself.
In contrast, CD-GNN explicitly models inductive subgraphs as observed confounders, and effectively blocks both confounding and spillover paths during inference.
Besides, $\text{CIE}_{\text{GAT}}$ and $\text{CIE}_{\text{GCN}}$ incur substantial computational overhead and run out of memory when applied to three large-scale datasets.
By selecting similar neighbors using causal effects, CAT alleviates the distraction effect and performs the second-best on the least-heterophilous Questions dataset. However, as heterophily increases, CAT falls behind our CD‐GNN. This shows that while causal effects are beneficial, robust performance in highly heterophilous graphs requires explicit causal-shortcut disentanglement and causal signal propagation as in our CD-GNN.
FAGCN and LatGRL rely on separating low- and high-frequency graph signals, while CAGNN and GGCN adjust edge weights based on neighbor effects or node relative degree. However, both graph signals and node-level properties are intrinsic, correlation-driven indicators of heterophily.
They fail to address the root cause: the presence of inductive subgraphs that induce spurious correlations in heterophilic graphs. 
CD-GNN outperforms these baselines by explicitly mitigating confounding and spillover effects arising from shortcut inductive subgraphs, achieving improved performance.

\subsection{Ablation Study and Parameter Analysis }
We explore how key designs, including Shortcut Amplification ($\mathcal{L}_s$), Causal Learning ($\mathcal{L}_c$), Counterfactual Learning ($\mathcal{L}_{cf}$), and Hilbert-Schmidt Independence Criterion ($\mathcal{L}_{HSIC}$), impact the performance of our proposed CD-GNN model.
As shown in Figure~\ref{fig:ablation}, removing any one of these components leads to a noticeable drop in accuracy.
We analyze the effectiveness of each key component as below.
The Shortcut Amplification loss $\mathcal{L}_s$ explicitly steers CD‐GNN to amplify the easier‐to‐learn subgraphs (the shortcuts) by assigning larger gradients to high‐confidence, easy examples. This forces the shortcut GNN ($\text{GNN}^L_s$) to rapidly capture and isolate shortcut signals, preventing them from mixing with more causal subgraphs.
The Causal Learning loss $\mathcal{L}_c$ is key to CD-GNN's success. By using a difficulty weight $\mathcal{W}(\mathbf{h})$, which is larger for truly causal subgraphs, we explicitly up-weight the cross-entropy loss on complex yet informative causal patterns.
The Counterfactual loss $\mathcal{L}{cf}$ enforces sample-level decorrelation by intervening on shortcut embeddings and preventing their influence on causal predictors. The HSIC loss $\mathcal{L}{HSIC}$ enforces node-level independence across all embeddings. Ablation studies show that removing either loss reduces accuracy, highlighting the necessity of both for isolating true causal subgraphs and ensuring robust generalization.

\begin{figure}[!ht]
\centering
\begin{minipage}[t]{0.23\textwidth}
\centering
\includegraphics[width=\textwidth]{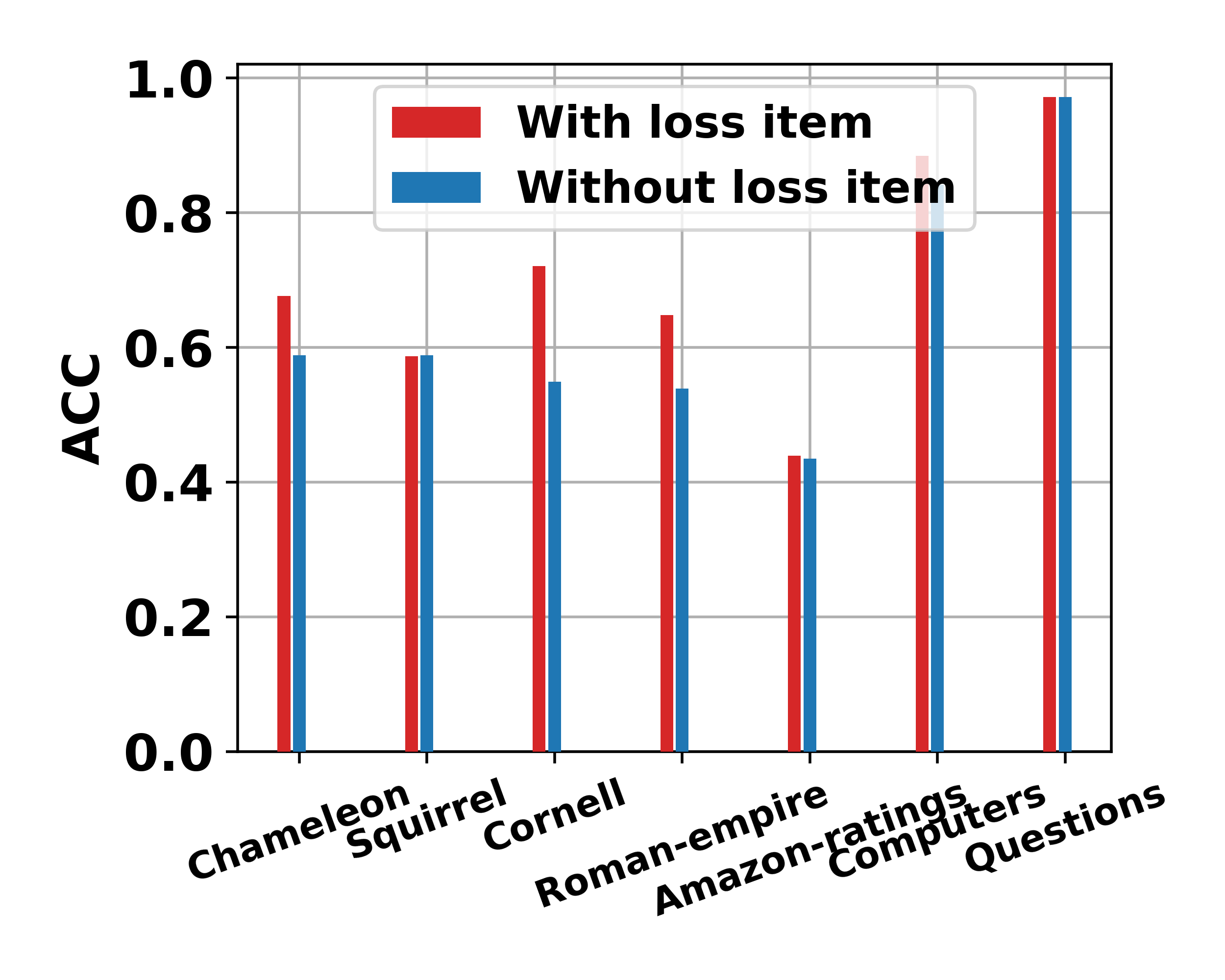}
\subcaption{Ablate $\mathcal{L}_{s}$}
\end{minipage}
\begin{minipage}[t]{0.23\textwidth}
\centering
\includegraphics[width=\textwidth]{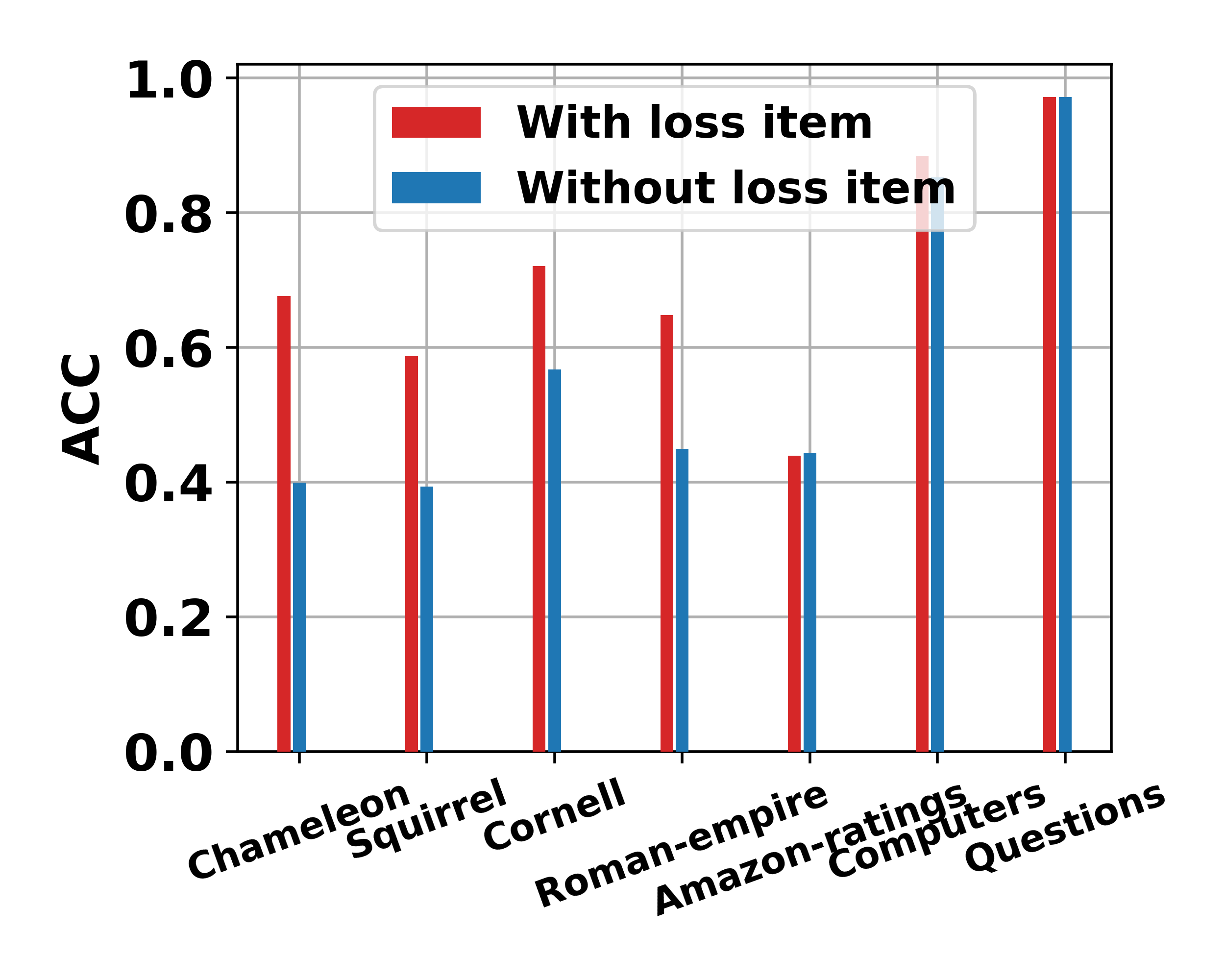}
\subcaption{Ablate $\mathcal{L}_{c}$}
\end{minipage}
\begin{minipage}[t]{0.23\textwidth}
\centering
\includegraphics[width=\textwidth]{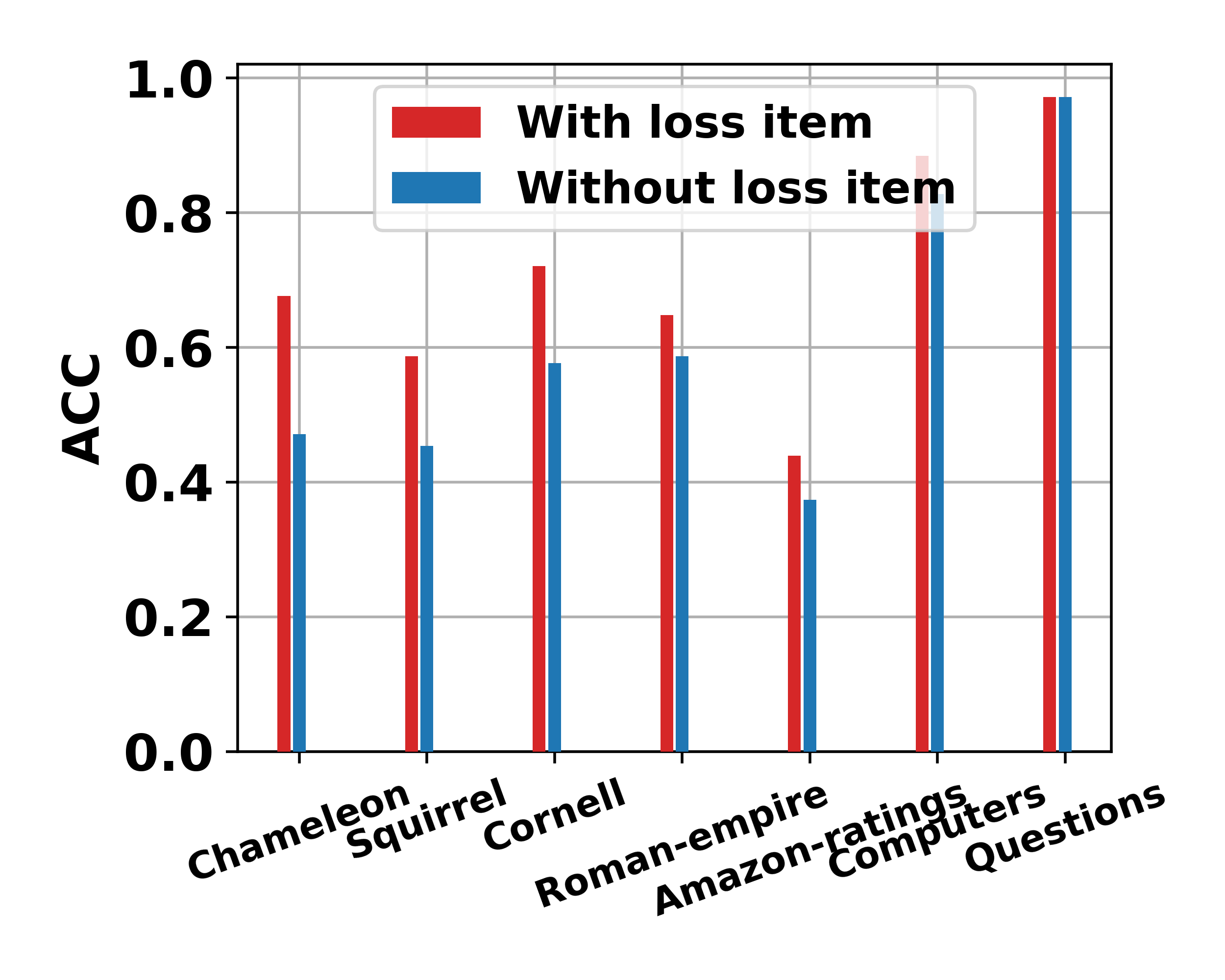}
\subcaption{Ablate $\mathcal{L}_{cf}$}
\end{minipage}
\begin{minipage}[t]{0.23\textwidth}
\centering
\includegraphics[width=\textwidth]{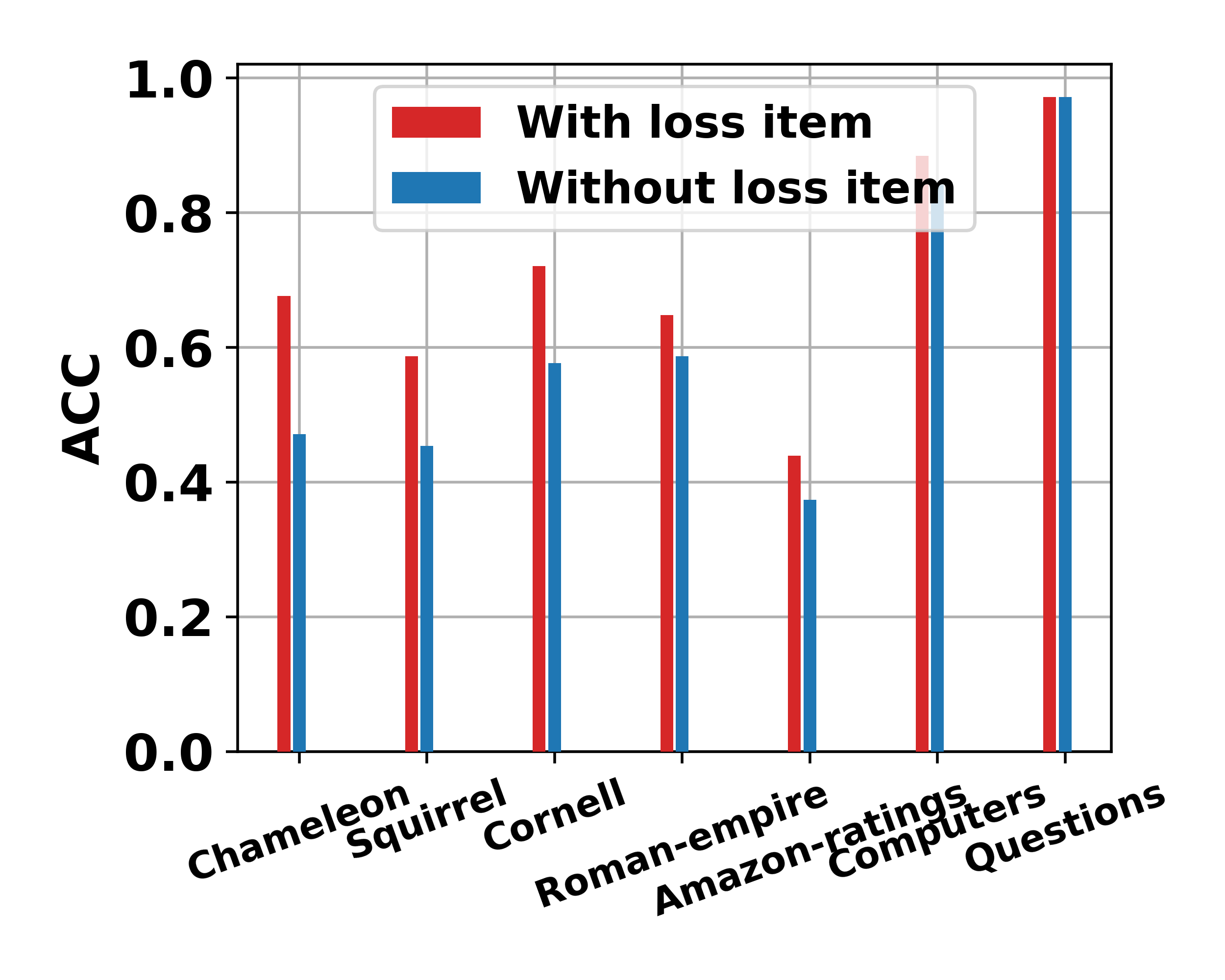}
\subcaption{Ablate $\mathcal{L}_{HSIC}$}
\end{minipage}
\caption{Ablation studies on seven datasets.
}
\label{fig:ablation}
\end{figure}

\begin{figure}[!ht]
\centering
\begin{minipage}[t]{0.4\textwidth}
\centering
\includegraphics[width=\textwidth]{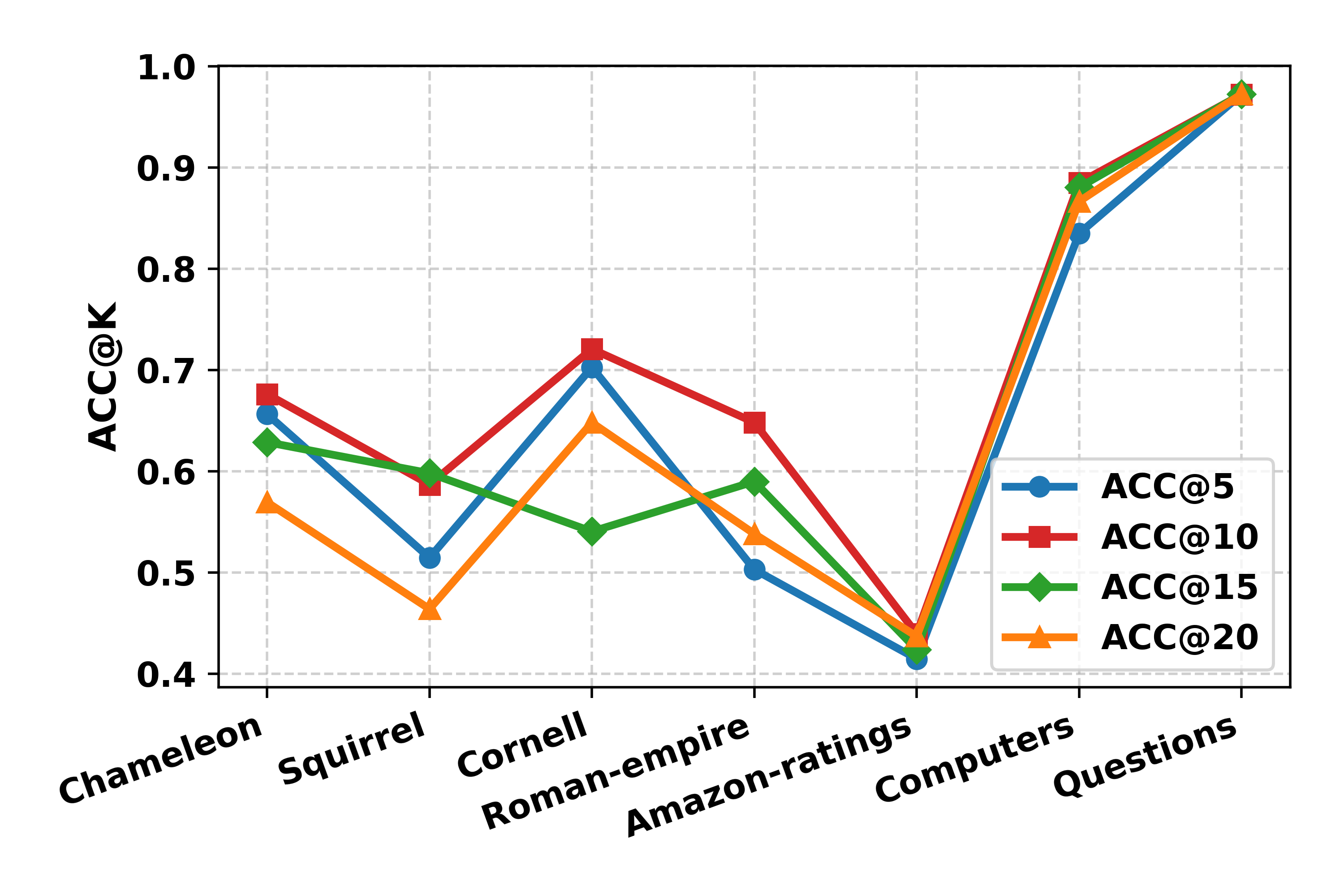}
\subcaption{Impact of $\lambda_1$ }
\end{minipage}
\begin{minipage}[t]{0.4\textwidth}
\centering
\includegraphics[width=\textwidth]{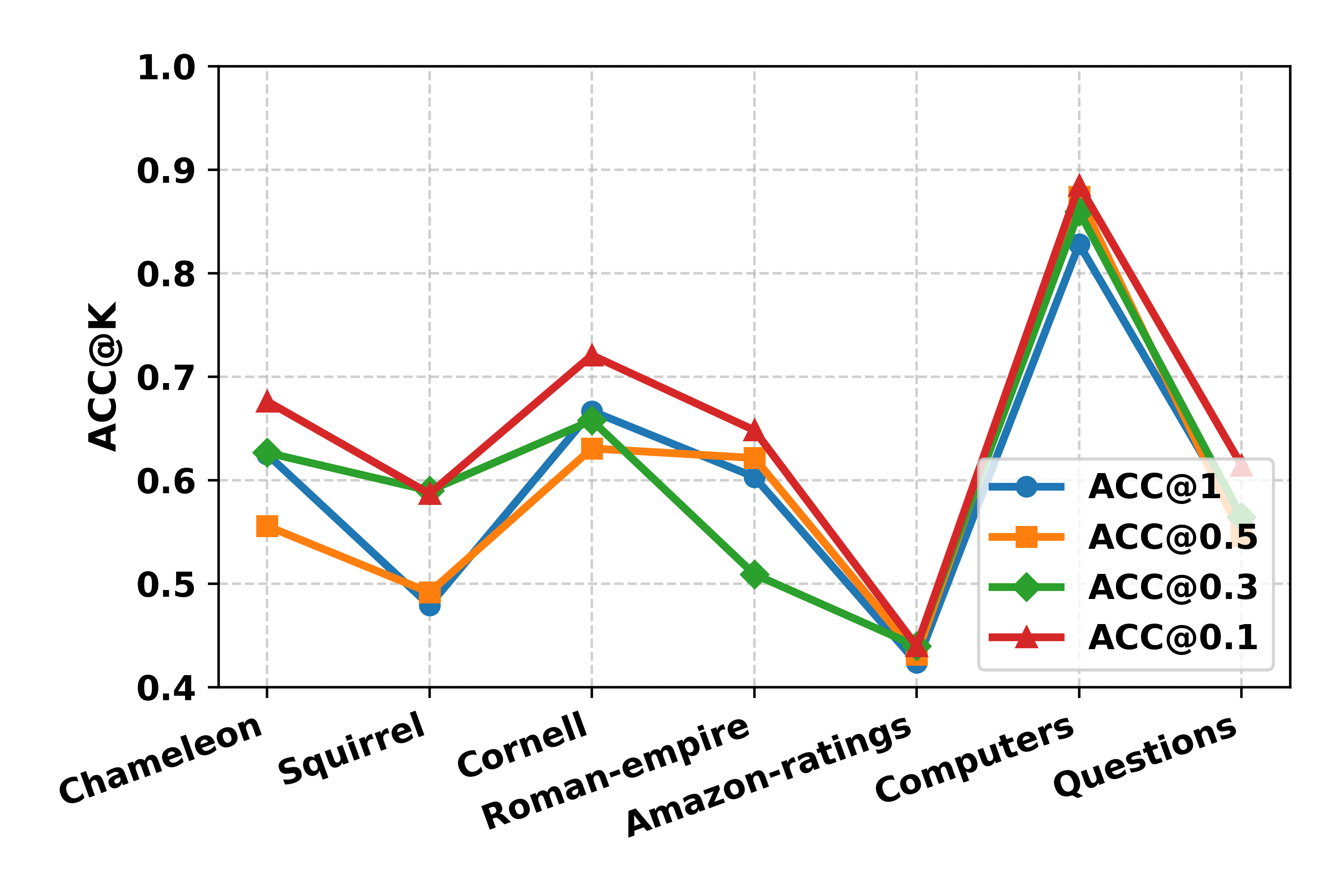}
\subcaption{Impact of $\lambda_2$}
\end{minipage}
\caption{Parameter analysis on seven datasets.
}
\label{fig:parameter}
\end{figure}

Figure~\ref{fig:parameter} (a) (b) report the parameter sensitivity of our CD-GNN w.r.t. $\lambda_1$ and $\lambda_2$, which control the impact degree of $\mathcal{L}_{cf}$ and $\mathcal{L}_{HSIC}$, respectively.
Based on existing studies~\cite{fan2022debiasing,gretton2005measuring}, $\lambda_1 $ is chosen from $\{5,10,15,20\}$; $\lambda_2$ is chosen from $\{0.1,0.3,0.5,1\}$.
CD-GNN peaks at $\lambda_1=10$, showing that a moderate $\mathcal{L}_{cf}$ weight best enforces shortcut–causal embedding independence. 
This is because setting $\lambda_1$ too high leads to excessive counterfactual shortcuts, inadvertently reintroducing harmful bias and degrading model performance. 
These results highlight that a properly $\mathcal{L}_{cf}$ is crucial for blocking spillover and show that shortcut reliance (even counterfactually) undermines learning the true graph structure in heterophilic settings.
Moreover, CD-GNN performance peaks at $\lambda_2=0.1$. Here, $\lambda_2$ weights the HSIC loss to enforce node-level independence between causal and shortcut embeddings.
If $\lambda_2$ is too small (e.g., 0.03 or removing $\mathcal{L}_{HSIC}$ altogether as in the ablation study), the model under‐penalizes node‐level independence, leaving shortcut bias partly intact.
If $\lambda_2$ is too large (e.g., $\geq 0.3$), the HSIC penalty begins to dominate the total loss, over‐regularizing both representations and degrading the model's capacity to fit true causal signals.

\section{Related Work}
\label{sec:relate work}
Unlike heterogeneous graphs~\cite{yu2026drift, wang2022off}, which describe diversity in node or edge types, heterophily is a distinct graph property where connected nodes tend to have dissimilar attributes or labels.
Most heterophily-oriented GNNs fall into two categories: non-local neighbor extension and architecture refinement. Non-local neighbor extension expands the neighbor set to include non-local but similar nodes. This is commonly achieved via higher-order neighbor mixing and potential neighbor discovery. MixHop~\cite{abu2019mixhop} exemplifies higher-order mixing by aggregating multi-hop messages with hop-specific transformations. Potential neighbor discovery methods identify similar nodes: Geom-GCN~\cite{pei2020geom} defines neighbors based on geometric distances in a latent space, while U-GCN and SimP-GCN~\cite{jin2021universal,jin2021node} construct kNN-based neighbor sets using feature similarity. 
GNN-architecture refinement methods aim to better exploit information from discovered neighbors without redefining the neighbor set. Their core idea is to distinguish similar and dissimilar neighbors via adaptive message aggregation. FAGCN~\cite{bo2021beyond} introduces edge-level weights and high-pass graph filters to capture high-frequency signals of heterophilic graphs. Similarly, ACM~\cite{luan2022revisiting}, NLGCN~\cite{liu2021non}, GPR-GNN~\cite{chien2020adaptive}, and BernNet~\cite{he2021bernnet} leverage high-pass filters to model global high-frequency signals. 
CAGNN~\cite{chen2023exploiting} learns the neighbor effect for each node using a von Neumann entropy-based metric. It employs a shared mixer module to adaptively weight and incorporate neighbor information based on the leaned neighbor effect. 
GGCN~\cite{yan2022two} unifies heterophily and oversmoothing by profiling nodes via node-level heterophily and relative degree. It corrects edge weights using degree-based structural information and feature-based weights.
Similarly, DMP~\cite{yang2021diverse} addresses attribute heterophily by learning attribute-wise aggregation weights, while GDAMNs~\cite{chen2022graph} targets label heterophily using hard and soft label attention to derive adaptive edge weights. 
LatGRL~\cite{shen2025heterophily} addresses semantic heterophily by constructing latent homophilic and heterophilic graphs via similarity mining. A dual-frequency (low- and high-frequency) fusion further separates shared and distinctive (heterophilic) signals.
Overall, these methods reweight neighbors using node-level signals or learned attention.
In contrast, our approach uncovers broader structural patterns by identifying inductive subgraphs that govern message propagation in heterophilic graphs.

\section{Conclusion}
We empirically and theoretically showed that the degraded performance of standard message‐passing GNNs on heterophilic graphs can be traced back to spurious ``shortcut'' associations induced by recurring inductive subgraphs. 
By reframing the problem through the lens of causal inference, our analysis uncovered two unblocked paths—confounding and spillover—that link these subgraphs directly to predictions and thereby mislead homophilic GNN architectures. Building on these insights, we introduced Causal Disentangled GNN (CD-GNN), a novel framework that (i) disentangles spurious inductive subgraphs from the true causal subgraphs and (ii) explicitly blocks both confounding and spillover channels.
Empirically, CD-GNN outperforms state-of-the-art heterophily‐aware GNNs across multiple real‐world datasets, delivering substantial gains in node classification accuracy, especially on graphs with extreme heterophily ratios.

\begin{acks}
This work is supported by the Hong Kong Research Grants Council under the General Research Fund (project no. P0047343).
This work is also supported by Curtin University RTCM Trailblazer Project (project no. PRO-701177).
\end{acks}

\appendix
\section{Proof of Thm~\ref{thm:theorem31prime} and Thm~\ref{thm:theorem32prime}}
\label{subsec:assumptions_subgraph}

We follow~\cite{yan2022two} and use the same notation for the graph, node labels/features, and the renormalized propagation operator.
Further, for each node $v_i$ and layer $\ell$, we allow the existence of an \emph{inductive subgraph}
$S_i^{(\ell)}\subseteq G_i^{(\ell)}$ within the $\ell$-hop computation graph of $i$. $S_i^{(\ell)}$ denotes an inductive subgraph that is relevant to the computation at layer $\ell$. We consider one-hop neighbor set $\mathcal N_i$ for node $v_i$. Let $\mathcal N_{i,S}^{(\ell)}$ denote the set of neighbors selected/covered by the subgraph $S_i^{(\ell)}$ for the layer-$\ell$
computation, and $\mathcal N_{i,R}^{(\ell)}$ denotes the remaining neighbors.

Define the subgraph-induced partition at layer $\ell$,
$\mathcal N_i^{(l)}=\mathcal N_{i,S}^{(\ell)}\cup\mathcal N_{i,R}^{(\ell)}$ with $\mathcal N_{i,R}^{(\ell)}=\mathcal N_i\setminus \mathcal N_{i,S}^{(\ell)}$,
$d_{i,S}^{(\ell)}=|\mathcal N_{i,S}^{(\ell)}|$, $d_i^{(l)}\triangleq d_{i,S}^{(\ell)}+d_{i,R}^{(\ell)}$.
Hence, we define the \emph{group-wise} quantities:
\begin{align}
&\bar r_i^{\ell}
\triangleq
\mathbb E_{\mathbf{A}\mid d_i}\!\left(
\frac{1}{d_i}\sum_{j\in\mathcal N_i} r_{ij}\,\Big|\, d_i,\, S_i^{(\ell)}
\right),
\label{eq:rbar_group_R_paper}
\end{align}
\begin{equation}
d^{(\ell)}_{i}\bar r^\ell_{i}:=d^{(\ell)}_{i,S}\bar r^\ell_{i,S}+d^{(\ell)}_{i,R}\bar r^\ell_{i,R}. 
\label{eq:rbar_decomposition_paper}
\end{equation}
For $g\in\{S,R\}$, we define
\[
\hat h_{i,g}^{\ell}:=\Pr\!\big(j\in\hat{\mathcal N}_i(\ell)\mid j\in\mathcal N_{i,g}^{(\ell)},\,d_i,y_i,S_i^{(\ell)}\big),
\]
and define $\bar r_{i,g}^{\ell}$ as the group-wise version of $\bar r_i^\ell$:
\begin{equation}
    \bar r^\ell_{i,g}:=\mathbb E_A\!\left[\frac{1}{d^{(\ell)}_{i,g}}\sum_{j\in\mathcal N^{(\ell)}_{i,g}} r_{ij}\ \Big|\ d_i,S_i^{(\ell)}\right], \,\, \, d_{i,g}^{(\ell)}:=|\mathcal N_{i,g}^{(\ell)}|,
\end{equation}
\begin{equation}
    \beta_i^\ell:=\frac{d_{i,S}^{(\ell)}\bar r_{i,S}^{\ell}}{d_{i,S}^{(\ell)}\bar r_{i,S}^{\ell}+d_{i,R}^{(\ell)}\bar r_{i,R}^{\ell}}\in[0,1],\,
\hat h_{i,\mathrm{eff}}^\ell:=\beta_i^\ell \hat h_{i,S}^\ell+(1-\beta_i^\ell)\hat h_{i,R}^\ell.
\label{eq:C4_heff_simple}
\end{equation}

\begin{assumption}
\label{ass:group_exchangeability}
We assume \emph{group-wise exchangeability}: the multisets
$\{(y_j,f_j^{(\ell)},d_j): j\in\mathcal N_{i,S}^{(\ell)}\}$ and
$\{(y_j,f_j^{(\ell)},d_j): j\in\mathcal N_{i,R}^{(\ell)}\}$
are exchangeable within each group (no exchangeability across groups).
\end{assumption}

These definitions ensure that the movement factors in Thm~\ref{thm:theorem31prime} and Thm~\ref{thm:theorem32prime})
retain the same functional form as their baseline counterparts while explicitly accounting for the inductive subgraph.

\begin{proof}[Proof of Theorem 3.1]
    Fix a node $i$ and condition on $(d_i,y_i,S_i^{(0)})$. From the renormalized propagation~\cite{yan2022two} at $\ell=0$,
\begin{equation}
f_i^{(1)}=\frac{1}{d_i+1}\Big(f_i^{(0)}+\sum_{j\in\mathcal N_i} r_{ij}f_j^{(0)}\Big)
\label{eq:thm31_update}
\end{equation}

Using the subgraph-induced partition $\mathcal N_i=\mathcal N^{(0)}_{i,S}\cup \mathcal N^{(0)}_{i,R}$ and linearity of expectation, $\mathbb E\!\left[f_i^{(1)}\mid d_i,y_i,S_i^{(0)}\right]=$
\begin{align}
&\frac{1}{d_i+1}\Big(
\mathbb E[f_i^{(0)}\mid y_i]
+\!\!\sum_{j\in\mathcal N^{(0)}_{i,S}}\!\! r_{ij}\,\mathbb E\!\left[f_j^{(0)}\mid d_i,y_i,S_i^{(0)},j\in\mathcal N^{(0)}_{i,S}\right]  \nonumber\\
&\qquad \qquad\qquad+\!\!\sum_{j\in\mathcal N^{(0)}_{i,R}}\!\! r_{ij}\,\mathbb E\!\left[f_j^{(0)}\mid d_i,y_i,S_i^{(0)},j\in\mathcal N^{(0)}_{i,R}\right]
\Big)
\label{eq:thm31_split}
\end{align}

Under the mean model in~\cite{yan2022two}, for any fixed $y_i$ we have
$\mathbb E[f^{(0)}\mid y\neq y_i]=-\rho\,\mathbb E[f_i^{(0)}\mid y_i]$.
According to Assumption~\ref{ass:group_exchangeability}, by the law of total expectation,
\begin{align}
\mathbb E\!\left[f_j^{(0)}\mid d_i,y_i,S_i^{(0)},j\in\mathcal N^{(0)}_{i,S}\right]
&=\big((1+\rho)h^0_{i,S}-\rho\big)\,\mathbb E[f_i^{(0)}\mid y_i], \label{eq:thm31_Smean}\\
\mathbb E\!\left[f_j^{(0)}\mid d_i,y_i,S_i^{(0)},j\in\mathcal N^{(0)}_{i,R}\right]
&=\big((1+\rho)h^0_{i,R}-\rho\big)\,\mathbb E[f_i^{(0)}\mid y_i]. \label{eq:thm31_Rmean}
\end{align}

Substituting \eqref{eq:thm31_Smean}--\eqref{eq:thm31_Rmean} into \eqref{eq:thm31_split} yields $\mathbb E\!\left[f_i^{(1)}\mid d_i,y_i,S_i^{(0)}\right]=$
\begin{equation}
\frac{1+W^0_{i,S}\big((1+\rho)h^0_{i,S}-\rho\big)+W^0_{i,R}\big((1+\rho)h^0_{i,R}-\rho\big)}{d_i+1}\,
\mathbb E\!\left[f_i^{(0)}\mid y_i\right]
\label{eq:thm31_gain_two}
\end{equation}
where $W^0_{i,S}:=\sum_{j\in\mathcal N^{(0)}_{i,S}} r_{ij}$ and $W^0_{i,R}:=\sum_{j\in\mathcal N^{(0)}_{i,R}} r_{ij}$.
Define the one-layer dominance weight $\beta_i^0:=\frac{W^0_{i,S}}{W^0_{i,S}+W^0_{i,R}}$ and the effective homophily
$h^0_{i,\mathrm{eff}}:=\beta_i^0 h^0_{i,S}+(1-\beta_i^0)h^0_{i,R}$, and let $W_i^0:=W^0_{i,S}+W^0_{i,R}$.
Then \eqref{eq:thm31_gain_two} simplifies to Eq.~\eqref{eq:C4_gain_simple}, completing the proof. 
\end{proof}

\begin{proof}[Proof of Theorem 3.2]
Fix layer $\ell$ and condition on $(d_i,y_i,S_i^{(\ell)})$. The baseline deep-layer analysis (Thm~3.2 in~\cite{yan2022two})
introduces the event that a neighbor contributes \emph{positively} at layer $\ell$. The neighbor set is denoted by $\hat{\mathcal N}_i(\ell)$.
Applying the same conditional-splitting argument as in the baseline proof but separately within the $S$-group and the $R$-group, the expected neighbor contribution decomposes additively into two parts, each taking the same affine form as the baseline term.
Collecting the two group contributions and factoring by Eq.~\eqref{eq:rbar_decomposition_paper}
yields the compact gain form
\[
\mathbb E\!\left[f_i^{(\ell+1)}\mid d_i,y_i,S_i^{(\ell)}\right]
=
\frac{\big((1+\rho_i^\ell)\hat h^\ell_{i,\mathrm{eff}}-\rho_i^\ell\big)\,d_i\bar r^\ell_i+1}{d_i+1}\,
\xi_i^\ell\,\mathbb E\!\left[f_i^{(0)}\mid y_i\right]
\]
which is exactly Eq.~\eqref{eq:C5_gain_simple}. 
\end{proof}

\bibliographystyle{ACM-Reference-Format}
\balance
\bibliography{bibs}

\end{document}